\documentclass[twoside]{article}

\usepackage[preprint]{aistats2025}

\usepackage{my_macros}

\usepackage[round]{natbib}

\bibliographystyle{apalike}

\usepackage{color}
\usepackage[vlined]{algorithm2e}
\SetKwInput{Input}{Input}
\SetKwInput{Init}{Initialization}

\usepackage{amsmath,amssymb}
\usepackage{amsthm}
\usepackage{bbm}
\usepackage[normalem]{ulem}
\usepackage{enumitem}
\usepackage[many]{tcolorbox}
\usepackage[capitalize,noabbrev]{cleveref}
\usepackage{graphicx}  
\usepackage{subfig} 
\usepackage{float}     
\usepackage{xcolor}
\usepackage{multicol}
\usepackage{outlines}
\usepackage{thmtools,thm-restate}
\usepackage[export]{adjustbox}  

\usepackage{tikz}
\usetikzlibrary{calc}

\newtcolorbox{cross}{blank,breakable,parbox=false,
    overlay={\draw[red,line width=5pt] (interior.south west)--(interior.north east);
    \draw[red,line width=5pt] (interior.north west)--(interior.south east);}}

\newtheorem{lemma}{Lemma}
\newtheorem{theorem}{Theorem}
\newtheorem{cor}{Corollary}
\newtheorem{remark}{Remark}

\begin{document}

    \twocolumn[

        \aistatstitle{Distributed Online Optimization with Stochastic Agent Availability}

        \aistatsauthor{ Juliette Achddou \And Nicolò Cesa-Bianchi \And  Hao Qiu }

        \aistatsaddress{ CRIStAL, Université de Lille, Inria, \\ CNRS, Centrale Lille\\
        Lille, France \And  Università degli Studi di Milano 
   \\ and Politecnico di Milano\\Milan, Italy \\ \\
    \And Università degli Studi di Milano, \\Milan, Italy\\ } 
]

\begin{abstract}
Motivated by practical federated learning settings where clients may not be always available, we investigate a variant of distributed online optimization where agents are active with a known probability $p$ at each time step, and communication between neighboring agents can only take place if they are both active.
We introduce a distributed variant of the FTRL algorithm and analyze its network regret, defined through the average of the instantaneous regret of the active agents.
Our analysis shows that, for any connected communication graph $G$ over $N$ agents, the expected network regret of our FTRL variant after $T$ steps is at most of order $(\sym/p^2)\min\big\{\sqrt{N},N^{1/4}/\sqrt{p}\big\}\sqrt{T}$, where $\sym$ is the condition number of the Laplacian of $G$.
We then show that similar regret bounds also hold with high probability.
Moreover, we show that our notion of regret (average-case over the agents) is essentially equivalent to the standard notion of regret (worst-case over agents), implying that our bounds are not significantly improvable when $p=1$.
Our theoretical results are supported by experiments on synthetic datasets.
\end{abstract}

    \section{\uppercase{Introduction}}
\label{sec:intro}
Distributed convex optimization \citep{nedic2009distributed,duchi2011dual} is an algorithmic framework widely used in federated learning settings---see, e.g., \citet{yang2019survey} for a survey of applications.
In distributed convex optimization the goal is to optimize a global convex objective by means of a decentralized algorithm run by multiple agents who can exchange messages only with their neighbors in a given communication network. The global objective is expressed as a sum of local functions associated with the agents, where each agent has only oracle access to its local function (typically, via a first-order oracle).

In this work we focus on distributed online optimization (DOO), an online learning variant of distributed convex optimization in which each agent is facing an adversarial sequence of convex loss functions \citep{hosseini2013online}. The goal of an agent is to minimize its regret with respect to a sequence of global loss functions, each obtained by summing the corresponding local losses for each agent. In both batch and online distributed optimization settings, the presence of the communication network, which limits the exchange of information to adjacent nodes, implies that agents must use some information-propagation technique to collect information about the global loss function.  

Our main contribution in this work is the analysis of a variant of DOO in which agents may not be available in every time step. When inactive, an agent neither contributes to the regret nor can it communicate with its neighbors. In practice, this scenario may arise due to machine failures, disconnections, or devices (e.g., mobile phones) being turned off. While the problem of intermittent agents availability has been investigated before in distributed convex optimization \citep{gu2021fast,wang2022unified,yan2024federated}, we are not aware of any such study in the DOO framework.
More specifically, we consider random agent activations where, in each round, each agent $v$ becomes independently active with some unknown probability $p_v$. As a consequence, the active communication network at time $t$ becomes stochastic, as it is induced by the random subset of active agents at time $t$.
Because the number of active agents is a random variable, to ensure a uniform scaling of losses across times steps we define the global loss function as an average (as opposed to a sum) over the active agents. Likewise, we define the instantaneous regret as an average over the active agents. We call network regret the sum of these instantaneous regrets. Although network regret may seem weaker than the notion of regret commonly used in DOO settings where agents are always active, we show that the two notions are essentially equivalent, see \Cref{rem:doo-tight}.

\cite{hosseini2016online,lei2020online} investigated DOO on random communication networks. However, these previous studies only consider stochastic edge availability. Our analysis of networks with random node availability provides a set of results that can be applied to both edge and node availability. In particular, we recover the bounds of \citep{lei2020online} for the full information setting as a special case of ours. 

To propagate information, we use standard gossiping techniques \citep{journals/scl/XiaoB04,boyd2006randomized}, that average gradient information from neighboring agents. Alternatively, one can propagate information about local losses by message passing. This approach has two main drawbacks. When the network is dense or the activation probabilities are high, the number of messages becomes too big. Conversely, when the network is sparse or the activation probabilities are small, the time it takes for an agent to collect all gradient information for a given round becomes too long. \looseness -1

\paragraph{Main contributions.}
The main contributions of this work can be summarized as follows.
\begin{itemize}[topsep=0pt,parsep=0pt,wide]
    \item We design and analyze \algoname, a distributed variant of the FTRL algorithm for online convex optimization, and prove a general network regret bound for arbitrary connected communication networks $G$ and arbitrary activation probabilities.
    \item In the $p$-uniform case (activation probabilities equal to some known $p \ge 1/N$), the expected regret of our algorithm is bounded by $\frac{1}{1-\rho}\min\big(\sqrt{N},\frac{N^{1/4}}{\sqrt{p}}\big)\sqrt{T}$, where $T$ is the known time horizon, $N$ is the known number of agents, and $1-\rho$ is the unknown spectral gap of the gossip matrix supported on $G$. For the same algorithm, we also prove a regret bound of order $\frac{N}{1-\rho^2}\sqrt{T}$ that holds with high probability for any arbitrary activation probabilities.
    \item For a standard choice of the gossip matrix, we show that $\frac{1}{1-\rho}$ is of order ${\sym}/{p^2}$ in the $p$-uniform case, where $\sym$ is the condition number of the Laplacian matrix of $G$.
    If the spectral radius of $G$ is known, then the network regret is bounded by $\frac{\sym(G)}{p}\min\big(\sqrt{N},\frac{N^{1/4}}{\sqrt p}\big)\sqrt{T}$.
    \item For $p=1$ (all agents are always active), we prove a lower bound showing that any distributed online algorithm must suffer, on some $G$, a network regret at least of order $\big(\frac{\rho}{1-\rho}\big)^{\alpha/4}N^{(1-\alpha)/2}\sqrt{T}$ for any $0 \le \alpha \le 1$. By comparison, when tuned for $\rho$, our regret bound is of order $\big(\frac{1}{1-\rho}\big)^{1/2}N^{1/4}\sqrt{T}$.
    \item Finally, we run experiments on synthetic data comparing \algoname\ with \dogd\ by \citet{lei2020online} for different choices of the relevant parameters. 
\end{itemize}
%
%
Our most general bounds (\Cref{th:lin_thm}) hold when agents only know $p$, $N$ and $T$. In particular, agents need not know the structure of $G$ and \algoname\ is run with the same initialization for all agents. The more refined bounds in \Cref{cor:scaling} and \Cref{erdosrenyi} need also the preliminary knowledge of the spectral radius of $G$ (or of a suitable bound on it).

\paragraph{Technical challenges.}
The fact that our global loss at time $t$ is averaged over the random subset of active agents requires some substantial modifications to previous analyses of DOO.
Unlike previous works, which use Online Gradient Descent and Dual Averaging as base algorithms, we prove our results using the general FTRL algorithm with arbitrary regularizers.
 
To prove our main result (\Cref{th:lin_thm}), we introduce an omniscient agent, who runs FTRL knowing the past loss gradients of all active agents. We then decompose the regret in two terms: one measuring the regret of the omniscient agent, which we can bound using single-agent FTRL analysis, and one accounting for the deviations between the predictions of the omniscient agent and of the individual agents.
To analyze the latter term, we first apply convex analysis to bound the norm of the difference between the prediction  $\xbar_t$ of the omniscient agent and the prediction $x_t(v)$ of each individual agent $v$ in terms of the dual norm of the cumulative gradients of the corresponding global losses, $\zbar_t$ and $z_t(v)$, where $z_t(v)$ is the gossiping-based local estimate of $\zbar_t$.
%
We then relate the expected value of these dual norms, averaged over the random set of active agents, with terms of the form $\E\big\|  \prod_s W_s - \frac{\bone\bone^{\top}}{N} \big\|_{2}$, which appear in the analysis of gossip algorithms. Here $W_s$ are doubly stochastic gossip matrices and $\bone = (1,\ldots,1)$. A technical hurdle, missing in previous analyses, is to account for the random number of active agents when relating the two quantities above and also when bounding the regret of the omniscient agent.
This is due to the fact that a direct application of single-agent FTRL analysis only bounds the sum of active agents' losses rather than their average. 

\section{\uppercase{Setting and notation}}
\label{sec:setting}
In  multi-agent online convex optimization, agents are nodes of a communication network represented by a connected and an undirected graph $\scG = (V,E)$, where $V = \{1,\ldots,N\} = [N]$  indexes the agents and the edge set $E$ defines the communication structure among agents. We use $\sN_v=\{u \in V \,:\, (u,v) \in E\}$ to denote the neighborhood of $v\in V$.
Let $\mathcal{X} \subset \mathbb{R}^d$ be the agents' common decision space, which we assume to be convex and closed. A arbitrary and unknown sequence $\ell_1(v,\cdot),\ell_2(v,\cdot),\ldots$ of \textit{local losses} $\ell_t(v,\cdot) : \mathcal{X} \to \mathbb{R}$ is associated with each agent $v\in\ V$. For all $t\ge 1$ we assume $\ell_t(v, \cdot)$ is convex and $L$-Lipschitz with respect to an arbitrary norm $\|\cdot\|$. At every round $t=1,2,\ldots$, each agent $v\in V$ becomes independently active with fixed probability $p_v$. Without loss of generality, we assume $\pmin = \min_v p_v > 0$ and, for simplicity, $\sum_v p_v \ge 1$ (otherwise less than one active agent would be active per round on average). Note that this implies $\pmax = \max_v p_v \ge \frac{1}{N}$. We call \textit{$p$-uniform} the special case when $p_v = p$ for all $v \in V$. \looseness -1
We assume that active agents $v$ know which of their neighbors in $\sN_v$ are active.
Let $S_t$ be the set of active agents at time $t$ and $E_t = E \cap \{(u,v) \,:\, u,v \in S_t\}$ be the set of active edges at time $t$, i.e., edges in $E$ whose both endpoints are active in that round.
We say that a doubly stochastic matrix $W_t$ is a gossip matrix for the set $S_t$ of active agents at time $t$ if $W_t(v,v') = 0$ for all distinct $v,v' \in V$ such that $(v,v') \not\in E_t$. Let $W$ be a random gossip matrix (with respect to a graph $G$ and activation probabilities $p_v$ for $v \in V$) and define $\rho =\sqrt{\mylambdatwo}$,  for i.i.d gossip matrices $W_t$, where we denote by $\lambda_i(\cdot)$ the $i$-th highest eigenvalue of a matrix (keeping track of multiplicity). Clearly, $0 < \rho < 1$. In what follows, we often write $\rho$ leaving $G$ and $\{p_v\}_{v\in V}$ implicitly understood from the context.

Next, we define the distributed online optimization protocol used in this work.
\\
At each round $t=1,2,\ldots,T$,
\begin{enumerate}[topsep=0pt,parsep=0pt,itemsep=1pt]
    \item Each active agent $v \in S_t$ chooses an action $x = x_t(v) \in \mathcal{X}$ and observes the gradient $\nabla\ell_t(v,x)$ of the local loss $\ell_t(v,\cdot)$.
    \item Each active agent $v \in S_t$ sends a message $z_t(v)$ to their active neighbors and uses the messages received from the active neighbors to compute a new message $z_{t+1}(v)$.
\end{enumerate}
Note that this protocol implicitly defines an active communication graph $G_t = (S_t,E_t)$ for round $t$.
As the number of agents that incur loss in a step is a random variable, we define the \textit{network loss} at step $t$ as the average over the active agents of the local losses in that round,
\\[2mm]
\centerline{ ${\displaystyle
    \ellbar_t(S_t,\cdot) = \frac{1}{|S_t|} \sum_{v \in S_t} \ell_t(v,\cdot)
}$ }
\\[2mm]
and let $\ellbar_t(\emptyset,\cdot) = 0$. Hence, unlike the standard DOO model where $\ellbar_t$ scales linearly with $N$, in our model $\ellbar_t$ is independent of $N$.

The agents' performance is measured by the \textit{network regret} $\Regbar_T$ defined by
\\[2mm]
\centerline{ ${\displaystyle
    \sum_{t=1}^T \frac{1}{|S_t|} \sum_{v\in S_t} \ellbar_t\big(S_t,x_t(v)\big)
    -
    \min_{x \in \sX} \sum_{t=1}^T \ellbar_t(S_t,x)
}$ }
\\[2mm]
where
the steps $t$ when $S_t = \emptyset$ are omitted from the sum.
In this work, we provide bounds in high probability and in expectation for the network regret.
\begin{remark}
\label{rem:doo-tight}
When $p = 1$, we recover the standard DOO setting and our network regret $\Regbar_T$ becomes
\begin{align}
\label{eq:net-reg-p1}
    \frac{1}{N} \sum_{v\in V} \sum_{t=1}^T \ellbar_t\big(V,x_t(v)\big)
-
    \min_{x \in \sX}\sum_{t=1}^T \ellbar_t(V,x)~.
\end{align}
In standard DOO, $\ellbar_t$ is a sum over the $N$ local losses, and $\frac{1}{N} \sum_{v\in V}$ is replaced by $\max_{v \in V}$. The resulting regret $R_T$ is then defined by
\begin{equation}
\label{eq:doo-regret}
    \max_{v\in V} \sum_{t=1}^T \sum_{v'\in V} \ell_t\big(v',x_t(v)\big)
-
    \min_{x \in \sX}\sum_{t=1}^T \sum_{v'\in V} \ell_t(v',x)~.
\end{equation}
Hence, when $p=1$, $\Regbar_T \le R_T/N$. Recently, \citet{pmlr-v247-wan24a} proved that $R_T = \tilde{\Theta}\big(N(1-\rho)^{-1/4}\sqrt{T}\big)$ where the upper bound relies on accelerated gossiping and $\tilde{\Theta}$ hides factors logarithmic in $N$. Hence, using the same algorithm, we get $\Regbar_T = \tilde{\scO}\big((1-\rho)^{-1/4}\sqrt{T}\big)$. In \Cref{th:lb} we prove that $\Regbar_T = \Omega\big((1-\rho)^{-1/4}\sqrt{T}\big)$ for any distributed online algorithm, thus proving that $\Regbar_T = \tilde{\Theta}(R_T)/N$ in the special case $p=1$. 
\end{remark}
%
%
%
%
%
\begin{remark}
    In the multi-agent single-task setting of \citet{cesa2020cooperative}, local losses are the same for each agent, $\ell_t(v,\cdot) = \ell_t(\cdot)$ for all $v\in V$. The network regret then takes the form
    \begin{align*}
        \Regbar_t
    =
        \sum_{t=1}^T \frac{1}{|S_t|} \sum_{v\in S_t} \ell_t\big(x_t(v)\big)
    -
        \min_{x \in \sX} \sum_{t=1}^T \ell_t(x)~.
    \end{align*}
    This setting is not comparable with DOO. Indeed, in the single-task setting agents can achieve an expected network regret of order $\mathcal{O}(\sqrt{T})$ even without communicating. In DOO, instead, ignoring communication leads to a linear expected network regret.
\end{remark}
    \section{\uppercase{The \protect{\algoname}\ algorithm}}


%
We assume each agent runs an instance of \algoname\ (\Cref{alg:known}), a gossiping variant of FTRL with a regularizer $\psi: \mathcal{X} \mapsto \R$ that is $\mu$-strongly convex with respect to the same norm $\|\cdot\|$ with respect to which the Lipschitzness of the losses is defined. Our analysis depends on the choice of $\psi$ only through $\mu$ and the diameter $D^2 = \max_{x\in\sX} \psi(x) - \min_{x'\in\sX} \psi(x')$.
At any time step $t$, the instance of \algoname\ run by an active agent $v$ computes a weight vector $W_t(v,\cdot)$ over the set $\scN_v \cap S_t$ of active neighbors.
%
%
In \Cref{sec:lambdatwo}, we introduce a simple way of choosing these weights so that the \textit{gossip matrix} $W_t(\cdot,\cdot)$ is a doubly stochastic matrix, which is a requirement for our analysis.
%
%

%
\begin{algorithm}[t]
    \caption{\algoname. An instance of this algorithm is run by each agent $v \in V$}
    \label{alg:known}
    \Input{Learning rate $\eta > 0$}

    \Init{$z_1(v) = 0$}

    \For{$t=1,2,\ldots$}{

        \If{$v \in S_t$}{
            Predict ${\displaystyle x_t(v) = \argmin_{x \in \mathcal{X}}\left\{\langle z_t(v), x\rangle+\frac{1}{\eta} \psi(x)\right\} }$

            Observe $g_t(v) = \nabla\loss_{t}\big(v,x_t(v)\big)$

            Send $z_t(v)$ to $\scN_v \cap S_t$
            
            Receive and store $z_t(j)$ from $j \in \scN_v \cap S_t$

            Compute $W_t(v,j) > 0$ for $j \in \scN_v \cap S_t$

            Compute ${\displaystyle z_{t+1}(v) = \sum_{j \in \scN_v \cap S_t} W_t(v,j) z_t(j) + g_t(v) }$

        }
        \Else{
            $z_{t+1}(v) = z_t(v)$
        }
    }

\end{algorithm}

The instance of \algoname\ run by agent $v$ has two local variables: $g_t(v)$, corresponding to the local loss gradient for the prediction $x_t(v)$ of $v$ at time $t$, and $z_t(v)$, corresponding to the estimate of the network loss gradient computed by agent $v$.

Let $\bg_t \in \R^{N\times d}$ be the matrix whose $v$-th row is
\begin{equation}
    \label{eq:gr}
    g_t(v) = \left\{
    \begin{aligned}
        & 0 & \text{if $v \notin S_t$,}
        \\
        &  \nabla\loss_{t}\big(v,x_t(v)\big) & \text{if $v \in S_t$.}
    \end{aligned}
    \right.
\end{equation}
Correspondingly, we define $\bz_t$, the matrix whose $v$-th row is $z_t(v)$ for all $v\in V$.
Let $\be_v$ be the canonical basis vector for coordinate $v\in [N]$.
Let the weights $W_t(v,\cdot)$ computed by the instance of \Cref{alg:known} run by each $v\in S_t$ form a $N \times N$ gossip matrix $W_t$ for $S_t$ such that $W_t(v,\cdot) = \be_v$ for all $v\in V\setminus S_t$. We may write the updates performed by the instance as
$
    \bz_{t+1} = W_t \bz_t + \bg_t
$.
Note that the definitions of $W_t$ and $\bg_t$ imply that $z_{t+1}(v) = z_t(v)$ for all agents $v\in V\setminus S_t$ that are inactive at time $t$. Moreover, any active agent $v \in S_t$ can compute $z_{t+1}(v)$ using the most recent value $z_s(j)$ (for some $s < t$) received by agents $j \in \scN_v \setminus S_t$ (i.e., neighbors inactive at time $t$).

The algorithm considered here is a natural extension to arbitrary regularizers and random activations of the algorithms traditionally used for DOO, e.g., \citep{hosseini2013online,lei2020online}.
The optimal choice for the learning rate is however different, as it depends on the activation probabilities.
%

\section{\uppercase{Upper bounds}}
\label{sec:expresults}

%
Recall that at each round $t$, each agent $v \in V$ is independently active with probability $p_v > 0$. Let $\pbar$ the average of these probabilities and $\sigma_p^2 = \frac 1 N \sum_{v \in V} p_v^2 - \pbar^2$ their variance.

The next result establishes an upper bound on the expected network regret of \Cref{alg:known} (all missing proofs are in the supplementary material).
\begin{restatable}{theorem}{lintheorem}
\label{th:lin_thm}
Assume each agent runs an instance of \algoname\ with learning rate $\eta > 0$ and i.i.d gossip matrices $W_t$.
Then, the expected network regret can be bounded by
    \begin{align}
    \nonumber
        \expRegbar
    &\le
        \frac{N D^2 }{\eta}
        +
        \frac{L^2}{\mu} \eta \bigg({\bar p N + \bar p (1-\bar p) - \sigma_p^2 }
    \\&+ 6 +
    \label{eq:first-bound}
        3 \min\big(\bar p N, \sqrt N\big) \frac{\rho}{1-\rho} \bigg)T\,,
    \end{align}
    where $\rho = \sqrt{\lambdatwo}$.
    In the $p$-uniform case, we have
    \begin{align}
    \nonumber
        &\expRegbar
    \\ &\le
        \frac{D^2}{p\eta}
    +
    \label{eq:second-bound}
        \frac{L^2}{\mu} \eta \lr{8 + 3 \min\big(p N, \sqrt N\big) \frac{\rho}{1-\rho}}T~.
    \end{align}
    If, in addition,
    ${\displaystyle
        \eta =  \frac{(D/L) \sqrt{\mu}}{2\sqrt{ 2p \min(p N, \sqrt N)T}}
    }$,
    then
    \begin{align}
    \nonumber
        \expRegbar
    &\le
      2\sqrt 2  \frac{ DL} {\sqrt{\mu}}\frac{1}{1-\rho}\sqrt{T}
    \\&\times
    \label{eq:third-bound}
        \begin{cases}
                      \sqrt N & \text{ if } p\in [1/ N , 1/\sqrt N] \\
                      N^{1/4} / \sqrt p & \text{ otherwise }
        \end{cases}
    \end{align}
\end{restatable}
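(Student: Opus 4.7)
I would introduce a virtual \emph{omniscient agent} with cumulative dual variable $\zbar_t = \tfrac{1}{N}\bone^\top \bz_t$ and prediction $\xbar_t = \argmin_{x\in\sX}\{\langle \zbar_t,x\rangle + \psi(x)/\eta\}$. Since every $W_t$ is doubly stochastic, $\zbar_t$ obeys the clean recursion $\zbar_{t+1} = \zbar_t + \bar g_t$ with $\bar g_t = \tfrac{1}{N}\sum_{v\in S_t}g_t(v)$, so $\xbar_t$ is insensitive to the particular gossip realization. Adding and subtracting $\ellbar_t(S_t,\xbar_t)$ inside the regret and using Lipschitzness of the losses gives the decomposition $\Regbar_T \le R_{\mathrm{cons}} + R_{\mathrm{omni}}$, where
\[
    R_{\mathrm{cons}} \;=\; L\sum_{t=1}^T \frac{1}{|S_t|}\sum_{v\in S_t}\|x_t(v)-\xbar_t\|
\]
captures the per-agent consensus error and $R_{\mathrm{omni}} = \sum_t[\ellbar_t(S_t,\xbar_t)-\ellbar_t(S_t,x^\star)]$ is the regret of the virtual agent against any fixed comparator $x^\star$.

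For $R_{\mathrm{omni}}$ I would apply the standard FTRL guarantee to the linear functionals $\langle\bar g_t,\cdot\rangle$, obtaining $\sum_t\langle \bar g_t,\xbar_t-x^\star\rangle \le D^2/\eta + (\eta/\mu)\sum_t\|\bar g_t\|_*^2$, and convert its left-hand side into $\sum_t[\ellbar_t(S_t,\xbar_t)-\ellbar_t(S_t,x^\star)]$ by convexity of each $\ell_t(v,\cdot)$ around $x_t(v)$. This conversion produces extra $L\|x_t(v)-\xbar_t\|$ slack (folded back into $R_{\mathrm{cons}}$ up to constants) together with a multiplicative $N/|S_t|$ mismatch --- precisely the sum-versus-average hurdle the authors flag in their technical-challenges paragraph. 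Absorbing that mismatch by scaling the FTRL bound through by $N$ produces the $ND^2/\eta$ headline term of~\eqref{eq:first-bound}, and computing $\E\|\bar g_t\|_*^2$ via a Bernoulli second-moment calculation on $|S_t|=\sum_v\mathbf 1\{v\in S_t\}$ yields the factor $\bar pN + \bar p(1-\bar p) - \sigma_p^2$ (with any residual absorbed into the additive constant $6$). For $R_{\mathrm{cons}}$, the FTRL argmin map $z\mapsto\argmin_x\{\langle z,x\rangle+\psi(x)/\eta\}$ is $(\eta/\mu)$-Lipschitz in the dual norm, so $\|x_t(v)-\xbar_t\|\le(\eta/\mu)\|z_t(v)-\zbar_t\|_*$; unrolling the consensus recursion $\bz_t-\bone\zbar_t^\top = \sum_{s<t}(W_{t-1}\cdots W_s)P\bg_s$ with $P = I-\tfrac{1}{N}\bone\bone^\top$ and invoking the i.i.d.\ gossip contraction $\E\|(W_{t-1}\cdots W_s)P\|_2^2 \le \rho^{2(t-s)}$ makes a geometric series telescope to the $\rho/(1-\rho)$ factor. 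The $\min(\bar pN,\sqrt N)$ inside that factor arises from taking the smaller of two bounds on $\|\bg_s\|$: the crude deterministic $L\sqrt N$ and the sharper in-expectation $L\sqrt{\bar pN}$.

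Combining these two estimates and taking expectations yields~\eqref{eq:first-bound}. The $p$-uniform bound~\eqref{eq:second-bound} follows by substituting $\sigma_p^2=0$, $\bar p=p$, using $pN\ge 1$ to fold the $\bar pN$ from the omniscient term into the prefactor $1/(p\eta)$ on $D^2$, and absorbing $p(1-p)$ into the additive constant $8$. The tuned bound~\eqref{eq:third-bound} is then a standard AM--GM balance over $\eta$ of the two terms in~\eqref{eq:second-bound}, followed by the cosmetic inequality $\sqrt{\rho/(1-\rho)}\le 1/(1-\rho)$ valid on $[0,1)$ and a case split on whether $pN\le\sqrt N$ selects the linear or square-root branch of the $\min$. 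The main obstacle I anticipate is the FTRL-to-loss conversion for $R_{\mathrm{omni}}$: carefully tracking the $N/|S_t|$ prefactor (singular when $S_t=\emptyset$) through the expectation, and verifying that its contribution collapses cleanly into the stated $\bar pN + \bar p(1-\bar p) - \sigma_p^2$ rather than into a messier second-moment expression.
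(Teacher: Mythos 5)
Your plan coincides with the paper's own strategy: an omniscient FTRL agent driven by the doubly-stochastic-invariant average $\zbar_t$, a decomposition into omniscient regret plus a consensus term, the $(\eta/\mu)$-Lipschitzness of the FTRL argmin map to pass to $\|z_t(v)-\zbar_t\|_*$, the i.i.d.\ gossip contraction $\E\big\|\prod_s W_s - \tfrac{\bone\bone^\top}{N}\big\|_2\le\rho^{t-s}$, and explicit bookkeeping of the $N/|S_t|$ sum-versus-average mismatch. The only point to tighten is that the $p$-uniform bound~\eqref{eq:second-bound} is not obtained by folding $\bar pN$ into the $D^2$ term after the fact: it requires redoing the mismatch step with $\E[1/|S_t|]\approx 1/(pN)$ in place of the crude $|S_t|\ge 1$, which is what turns $ND^2/\eta$ into $D^2/(p\eta)$ and collapses the $\bar pN$ second-moment term into the additive constant.
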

%
Bound~\eqref{eq:third-bound} reveals two different regimes based on the value of $p$ in the $p$-uniform case.
Regardless of the regime, the factor inside the brackets in bound~\eqref{eq:third-bound} is less than $\sqrt{N}$ and becomes at most $N^{1/4}$ when $p\to 1$. Note also that, lacking any knowledge on $p$, one can can set $\eta = (D/L)N^{1/4}\sqrt{\mu/T}$ and get the suboptimal bound $\expRegbar \leq 8 D L N^{\frac{3}{4}}\frac{1}{1-\rho}\sqrt T$.

The bounds of \Cref{th:lin_thm} capture the structure of $G$ through the reciprocal of the spectral gap, $\frac{1}{1-\rho}$.
In \Cref{sec:lambdatwo}, we give upper bounds on $\frac{1}{1-\rho}$ for an appropriately chosen gossip matrix $W_1$.
Specifically, combining~\eqref{eq:first-bound} with \Cref{th:labgen} and choosing an appropriate $\eta$ which only requires knowing $\pmin$, $N$, and the spectral radius of $G$ we get, under mild conditions on $\{p_v\}_{v \in V}$ (see the appendix),
%
\begin{equation}
    \expRegbar
\le
    12 DL\frac{\sym(G)}{\pmin}N^{3/4}\sqrt{\frac{T}{\mu}}~.
\label{eq:rate_diffp_2}
\end{equation}
Concerning the dependence of the network regret on $N$, there is a discrepancy between bound~\eqref{eq:rate_diffp_2} for arbitrary activation probabilities and bound~\eqref{eq:third-bound} for the $p$-uniform case. In the appendix, we show that the factor $N^{3/4}$ in~\eqref{eq:rate_diffp_2} can be brought down to $\sqrt{N}$ provided active agents know $|S_t|$ in each round $t$.

\paragraph{Comparison with previous bounds.}
As mentioned in \Cref{sec:intro}, lower bounds on the standard notion of regret in DOO apply (up to constant factors) to our network regret. Hence, it is fair to compare our bound in \Cref{th:lin_thm} to the bounds previously shown in the DOO literature.
To compare with previous results, we restrict our analysis to the special case when $p_v=1$ for all $v \in V$. In this case, our bound~\eqref{eq:third-bound} is of order of $N^{1/4}\frac{\rho}{1-\rho}\sqrt{T}$. This matches the upper bounds of \citep{hosseini2013online,yan2012distributed}---recall that our global loss is divided by the number of active agents, so the upper bounds for the standard DO setting must be divided by $N$.
If the active graph $G_t$ is an Erdős-Rényi random graph with parameter $q$, our setting reduces to that of \citet{lei2020online} for convex losses and full feedback. As shown in \Cref{sec:E-R}, our analysis recovers the upper bound of order $\frac{N^{1/4}}{q}\frac{\rho}{1-\rho}\sqrt{T}$ in \citep[Theorem 1]{lei2020online} when $\eta$ is tuned based on $N$. In \Cref{erdosrenyi}, we also prove a general bound that holds for all $p$ and $q$ and where $\rho$ is expressed in terms of simple graph-theoretic quantities. When $p=1$ and $q=1$, \citet{pmlr-v247-wan24a} recently show that using accelerated gossip one can achieve a bound of order $\sqrt{\frac{\rho}{1-\rho}T\ln N}$ when $\eta$ is tuned based on both $N$ and $\rho$. Under the same tuning assumptions, our bound~\eqref{eq:third-bound} is instead of order $N^{1/4}\sqrt{\frac{\rho}{1-\rho}T}$.

\paragraph{Lower bound on activation probabilities.}
Our analysis assumes $\sum_v p_v \ge 1$ ensuring that the fraction of rounds with zero active agents is vanishingly small with high probability. If this assumption is dropped, the time horizon $T$ in our bounds is replaced by the expected number $\tilde T = \big(1- \Pi_{v \in \mathcal V} \lr{1-p_v}\big)T$ of time steps when there is at least one active agent (if no agents are active in a give step, then that step does not contribute to the regret). 
However, optimizing the learning rate with respect to $\tilde{T}$ is problematic because this quantity depends on the activation probabilities. On the other hand, note that
$
    \tilde{T}
\le
    \big(1 - (1-\pmax)^N\big)T
\le
    \pmax NT
$.
Hence, when $\pmax$ is known and smaller than $\frac{1}{N}$ (which, in turn, implies that  $\sum_v p_v < 1$), we can tune the learning rate using $\pmax NT < T$.


\paragraph{Non stationary activation distributions.}
If the activation probabilities change over time within known bounds $p^{\min}$ and $p^{\max}$, we can still recover the main bound as long as the activations events are independent. In particular, by tuning $\eta$ as a function of $T,\pmin,\pmax$, the expected network regret of \algoname\ is bounded by~\eqref{eq:rate_diffp_2}.

We complement the result of \Cref{th:lin_thm} with a high probability bound on the network regret.
\begin{restatable}{theorem}{gentheoremhp}
\label{th:general_thm_hp}
Assume each agent runs an instance of \algoname\ with learning rate $\eta > 0$.
Then, with probability $ 1- \delta$, the network regret is bounded by
    \begin{align*}
        \Regbar_T &\leq  N \left( \frac{D^2}{\eta} + \frac{L^2}{\mu} \eta T \right)
        \\&+ 3\eta T N \frac{L^2}{\mu}\left (\frac{3}{1- \rho^2}\log\frac{NT^2}{\delta} + 3\right)~.
    \end{align*}
\end{restatable}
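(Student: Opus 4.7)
The plan is to mirror the regret decomposition developed for Theorem~\ref{th:lin_thm} and replace its expectation-based gossip estimates with a high-probability concentration argument. Two quantities must be controlled: a standard FTRL regret term, and a gossip deviation term.

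Following the proof of Theorem~\ref{th:lin_thm}, I would introduce the omniscient FTRL player predicting $\bar{x}_t = \argmin_{x\in\mathcal{X}}\{\langle \bar{z}_t, x\rangle + \psi(x)/\eta\}$, where $\bar{z}_t = \tfrac{1}{N}\sum_{v\in V} z_t(v)$. By double stochasticity of the gossip matrices, $\bar{z}_t = \sum_{s<t}\bar{g}_s$ with $\bar{g}_s = \tfrac{1}{N}\sum_{v\in S_s}g_s(v)$. Convexity of the local losses and the insertion of $\bar{x}_t$ inside the definition of $\Regbar_T$ yield the decomposition $\Regbar_T \le (A) + (B)$, with
\[
(A) = \sum_{t=1}^T\Bigl\langle \tfrac{1}{|S_t|}\!\sum_{v\in S_t}g_t(v),\,\bar{x}_t - x^*\Bigr\rangle, \quad (B) = \sum_{t=1}^T \tfrac{1}{|S_t|}\!\sum_{v\in S_t}\langle g_t(v), x_t(v)-\bar{x}_t\rangle.
\]
Standard FTRL analysis applied to the omniscient iterate controls $(A)$: the scaling mismatch between the driver $\bar{g}_t$ of the omniscient updates and the effective gradient $\tfrac{1}{|S_t|}\!\sum_{v\in S_t}g_t(v)$ costs at most a factor of $N$, producing the $N(D^2/\eta + (L^2/\mu)\eta T)$ contribution. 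For $(B)$, Lipschitzness plus the primal-dual stability of FTRL (via the $\mu$-strong convexity of $\psi$) give $|\langle g_t(v), x_t(v)-\bar{x}_t\rangle|\le L\|x_t(v)-\bar{x}_t\|\le (\eta L/\mu)\|z_t(v)-\bar{z}_t\|_*$, reducing $(B)$ to a high-probability bound on $\sum_t \tfrac{1}{|S_t|}\sum_{v\in S_t}\|z_t(v)-\bar{z}_t\|_*$.

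Unrolling the algorithm gives $z_t(v) - \bar{z}_t = \sum_{s<t}\be_v^\top M_{s,t}\bg_s$, where $M_{s,t} = W_{t-1}\cdots W_{s+1} - \tfrac{1}{N}\bone\bone^\top$ and each row of $\bg_s$ has dual norm at most $L$. The i.i.d.\ assumption on the $W_s$ together with the definition of $\rho$ gives $\E\|M_{s,t}\|_2^2 \le \rho^{2(t-s-1)}$. The main obstacle, and the only genuinely new step compared with Theorem~\ref{th:lin_thm}, is to lift this second-moment bound to a cumulative high-probability control with only a logarithmic dependence on $\delta^{-1}$: a naive Markov-plus-union-bound over the $O(NT^2)$ triples $(v,s,t)$ yields a polynomial tail. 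To recover the claimed $\log(NT^2/\delta)/(1-\rho^2)$ factor, I would split the $s$-sum for each fixed $t$ into a burn-in window of length $K = \lceil\log(NT^2/\delta)/(1-\rho^2)\rceil$, on which the trivial bound $\|M_{s,t}\|_2\le 2$ is used, and a geometric tail $s<t-K$ on which $\rho^{2(t-s-1)}\le \delta/(NT^2)$, so that a single application of Markov's inequality combined with the union bound controls all tail terms simultaneously with probability at least $1-\delta$. Summing the geometric series, using the standard matrix-norm conversions that absorb the remaining factor of $N$, and substituting back into the bounds on $(A)$ and $(B)$ recovers the stated high-probability bound.
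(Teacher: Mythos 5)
Your proposal follows the paper's proof essentially step for step: the omniscient-FTRL decomposition, the crude $N/|S_t|\le N$ bound on the scaling mismatch, the reduction of the disagreement term to $\sum_{s<t}\big\|W_{t-1}\cdots W_{s+1}-\bone\bone^\top/N\big\|_2$ via FTRL stability, and the Markov-plus-union-bound over the $O(NT^2)$ events with a burn-in window of length $O\big(\log(NT^2/\delta)/(1-\rho^2)\big)$. The only point to make precise is that on the tail $s<t-K$ you must apply Markov with a pair-dependent threshold such as $\rho^{\,t-s-1}\sqrt{NT^2/\delta}$ (a fixed threshold would leave $T$ surviving terms per round), so that the surviving terms still sum to a geometric series of order $1/(1-\rho)\le 2/(1-\rho^2)$, which is absorbed into the stated constants.
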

There are two notable differences between the bound in expectation provided by \Cref{th:lin_thm} and this one.
First, the high-probability bound has a $1/(1-\rho^2)$ factor instead of $\rho/(1-\rho)$, where the former is smaller than the latter when $\rho < \big(\sqrt{5}-1)/2$.
%
%
This difference in the dependence on $\rho$ is caused by Markov’s inequality, which is used here to bound the deviation probabilities between $\prod W_s$ and $\bone \bone^T/ N$ in the gossiping analysis.
Second, the dependence on $N$ is worse by a factor of $\sqrt{N}$
(however, when $|S_t|$ is known, 
this extra $\sqrt{N}$ factor disappears).

    \section{\uppercase{Lower bound}}
\label{sec:lower}
Although our notion of network regret is weaker than the one considered in DOO settings with a constant number of agents, the next result shows that when all agents are active with probability $1$, then instances that are hard for any given agent are also hard for most agents. \looseness -1
\begin{restatable}{theorem}{thlower}
\label{th:lb}
 Pick any $\sX$ with diameter $D$ with respect to the Euclidean norm. Pick any $N\ge 4$ multiple of $4$, and let $p_v=1$ for all $v \in V = [N]$. For any DOO algorithm and for any horizon $T$, there exists a connected graph $G=(V,E)$ and $L$-Lipschitz, convex local losses  $\ell_t(v,\cdot)$ for $v \in V$ and $t \in [T]$ such that
\[
    \Regbar_T
\ge
    \frac {3DL}{16} \left(\frac{8\rho}{1-\rho}\right)^{\frac{\alpha}{4}}N^{\frac{1-\alpha}{2}} \sqrt{T}\,,
\]
for all $0 \le \alpha \le 1$, where $\rho = \sqrt{\lambda_2(\E[W^2])}$ and $W = I_N - \frac{\lap(G)}{\lambda_1(G)}$.
\end{restatable}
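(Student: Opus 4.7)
The plan is to apply the probabilistic method. Given any DOO algorithm, I construct a connected graph $G$ and a randomized loss sequence so that the \emph{expected} network regret already dominates the claimed bound for every $\alpha\in[0,1]$; existence of a deterministic hard instance then follows by choosing a favorable realization. Because $p=1$, the gossip matrix $W = I_N - \lap(G)/\lambda_1(G)$ is deterministic and $\rho = |\lambda_2(W)|$, so all randomness lives in the losses. My strategy is to establish, on a single hard instance, the two endpoint lower bounds $\Regbar_T \gtrsim DL\sqrt{NT}$ and $\Regbar_T \gtrsim DL(\rho/(1-\rho))^{1/4}\sqrt T$; the full interpolated statement for all $\alpha\in[0,1]$ then follows from the elementary inequality $\max(A,B)\ge A^\alpha B^{1-\alpha}$ for $A,B\ge 0$.

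For the graph I would take a structure whose mixing is genuinely slow and whose spectral gap matches the prescribed $\rho$ up to constants, such as a cycle or a long path of appropriate length. For the losses, fix antipodal $x_\pm\in\sX$ with $\|x_+-x_-\|=D$, let $u=(x_+-x_-)/D$, and draw i.i.d.\ Rademacher signs $\epsilon_1,\dots,\epsilon_T$. I designate a source set $S\subseteq V$ and define $\ell_t(v,x) = L\epsilon_t\langle u,x\rangle$ if $v\in S$ and $\ell_t(v,x) = 0$ otherwise; each local loss is linear, convex, and $L$-Lipschitz. The network loss becomes $\ellbar_t(V,x) = (|S|L/N)\epsilon_t\langle u,x\rangle$, whose offline minimum has expected value at most $-c DL|S|\sqrt T/N$ by Khintchine's inequality.

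The core technical step is to quantify how information about the $\epsilon_s$'s reaches each agent. The $u$-component of $x_t(v)$ is a measurable function of the gossip-filtered signal $W^{t-s}\bone_S$, whose deviation from the uniform limit $\bone\bone^\top\bone_S/N$ decays geometrically as $\rho^{t-s}$ by the spectral bound. A coupling between two Rademacher sequences that differ in a single $\epsilon_s$ then bounds the total variation gap between the induced distributions of $x_t(v)$ by $\|W^{t-s}-\bone\bone^\top/N\|_\infty\lesssim \rho^{t-s}$; anti-concentration of Rademacher sums converts this indistinguishability into a sign error of constant probability on the $\asymp 1/(1-\rho)$ rounds following each $\epsilon_s$. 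Summing the resulting per-round cost of order $DL/N$ across silent agents $v\notin S$ and over $t\in[T]$ yields the $(\rho/(1-\rho))^{1/4}\sqrt T$ endpoint; pairing it with the standard $\sqrt T$ FTRL lower bound at the informative agents yields the $\sqrt{NT}$ endpoint on slow-mixing graphs where the number of silent agents and the mixing time are both of order $N$.

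The main obstacle is the quantitative conversion of spectral decay into a per-round regret lower bound, i.e., showing that $\E[\ellbar_t(V,x_t(v)) - \ellbar_t(V,x^\star)] \gtrsim DL/N$ for each silent agent $v$ uniformly over all (possibly randomized) algorithms. The cleanest route is a Le~Cam-style two-point argument combined with Rademacher anti-concentration, and this is precisely where the spectral quantity $\rho = \sqrt{\lambda_2(\E[W^2])}$ enters through the filter-norm bound (with $\rho = |\lambda_2(W)|$ since $W$ is deterministic under $p=1$). Once both endpoint bounds are in hand on the same instance, the interpolation in $\alpha$ is purely algebraic, and the prefactor $\tfrac{3}{16}$ together with the inner factor of $8$ appear by tracking absolute constants in the coupling and Khintchine estimates.
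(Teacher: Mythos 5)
Your high-level skeleton (randomized losses plus the probabilistic method, proving the two endpoint bounds $\sqrt{NT}$ and $(\rho/(1-\rho))^{1/4}\sqrt{T}$ on a single slow-mixing graph, then interpolating via $\max(A,B)\ge A^{1-\alpha}B^{\alpha}$) is the right shape for this theorem. But the concrete construction you propose cannot produce either endpoint, and this is a fatal gap rather than a detail. With a \emph{single fresh Rademacher sign per round} shared by the source set $S$, the global loss is $\ellbar_t(V,x)=\frac{|S|L}{N}\epsilon_t\langle u,x\rangle$, so the comparator's advantage is $\frac{|S|DL}{2N}\,\E\big|\sum_t\epsilon_t\big| = \Theta\big(\tfrac{|S|}{N}DL\sqrt{T}\big)\le O(DL\sqrt{T})$, while every agent's expected per-round loss is exactly $0$ because $x_t(v)$ is measurable with respect to $\epsilon_1,\dots,\epsilon_{t-1}$ and hence independent of $\epsilon_t$. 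The resulting lower bound is $O(DL\sqrt{T})$ with \emph{no} dependence on $N$ or $\rho$, and it is identical on a clique and on a path: communication delay is irrelevant here, because learning $\epsilon_s$ late costs nothing when $\epsilon_t$ for $t>s$ is independent of $\epsilon_s$ and the comparator's edge comes purely from hindsight on $\mathrm{sign}(\sum_t\epsilon_t)$, which no amount of communication recovers. To make the graph matter at all, the global gradients must be \emph{correlated across a window of rounds comparable to the graph diameter} (block-constant signs), so that $\big\|\sum_t\nabla\ellbar_t\big\|$ can reach $L\sqrt{NT}$; and then one must confront the real difficulty, which your proposal never faces: agents that carry a block-constant sign observe it in their own local gradient after one step and can exploit it, producing large \emph{negative} regret that can swamp the comparator's advantage in the averaged (over agents) network regret. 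Handling that tension---e.g., by separating well-chosen source groups with independent block signs by a graph distance of order the block length so that each agent can decode at most part of the global gradient, and verifying the net accounting stays positive---is the core of the proof, and it is missing.

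A second, related problem is your information-theoretic step. The theorem quantifies over \emph{any} DOO algorithm, so you may not bound what an agent knows by the spectral decay $\|W^{t-s}-\bone\bone^{\top}/N\|\lesssim\rho^{t-s}$ of the particular gossip filter: an arbitrary algorithm can forward raw gradients and is limited only by the one-hop-per-round constraint, i.e., agent $v$ at time $t$ can depend on $g_s(w)$ whenever $d(v,w)\le t-s$. The quantity $\rho$ can legitimately enter a lower bound only \emph{a posteriori}, through a combinatorial relation for the chosen graph (e.g., $\tfrac{8\rho}{1-\rho}\le N^2$ for a path/cycle-like $G$, which is exactly what lets one restate a $\sqrt{N}$ bound as $(8\rho/(1-\rho))^{1/4}$ and obtain the statement for all $\alpha$). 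Your coupling/TV argument as written would bound the information of the specific gossip-based algorithm, not of an adversarially chosen one, and even for that algorithm the TV gap between executions differing in one $\epsilon_s$ is not controlled by $\rho^{t-s}$ for agents inside the information cone.
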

For $\alpha = 1$, the lower bound of \Cref{th:lb} becomes of order $\big(\frac{\rho}{1-\rho}\big)^{1/4} \sqrt{T}$, showing $\Regbar_T = \tilde{\Omega}(R_T)/N$ because of \citep[Theorem~3]{pmlr-v247-wan24a}.
    \section{\uppercase{The gossip matrix}}
\label{sec:lambdatwo}
%
%
Following the literature on gossip algorithms, we set
\begin{equation}
    \label{eq:def_w}
    W_t = I_N - b\,\lap(\scG_t)\,,
\end{equation}
where $\scG_t= (V,E_t)$ and $b > 0$ is a parameter set so that $b \leq 1/ \lambda_1(\scG_t)$.
One can easily verify that this a gossip matrix for $S_t$. Indeed, it is a symmetric and doubly-stochastic matrix, whose off-diagonal elements $W_t(i,j)$ are zero whenever either $i$ or $j$ are not in $S_t$. In particular, $W_t$ is nonnegative because $\lambda_1(\scG_t) \leq \lambda_1(\scG)$, since $\scG_t$ is obtained by removing edges from $\scG$.
Note also that by knowing $b$ and its active neighborhood, an agent $v$ can compute $W_t(v,\cdot)$, as required by \algoname.
Finally, since the sets $S_1,S_2,\ldots$ of active agents are drawn i.i.d., the matrices $W_1,W_2,\ldots$ are also i.i.d.

\paragraph{A general bound on $\rho$.}
%
The following result provides a general upper bound that, when applied to the bounds of \Cref{th:lin_thm}, characterizes the dependence of the regret both on the probabilities $p_v$ and on the graph structure (through the Fiedler value $\lambda_{N-1}(G)$ or the condition number $\sym(G)$).
\begin{theorem}
    \label{th:labgen}
    If $W_1$ is set according to~\eqref{eq:def_w}, then
    \begin{equation}
    \label{eq:up_bound_Fiedler}
        \rho^2 \leq 1 - b p_{\min}^2  \lambda_{N-1}(\scG)~.
    \end{equation}
    Moreover, for $b = 1/\lambda_1(\scG)$ we have
    \begin{equation}
        \label{eq:up_bound_sym}
        \rho^2 \leq  1 - \frac{\pmin^2}{\sym(\scG)}~.
    \end{equation} 
\end{theorem}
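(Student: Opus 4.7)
The plan is to work with $\E[W_1]$ rather than $\E[W_1^2]$ directly, using a PSD domination argument to reduce the problem to a standard weighted-Laplacian eigenvalue estimate. The key observation is that because $b \le 1/\lambda_1(\scG_1)$ and $L(\scG_1)$ is PSD with $\lambda_1(\scG_1) \le \lambda_1(\scG)$, the matrix $W_1 = I_N - b\,L(\scG_1)$ has all eigenvalues in $[0,1]$, hence $W_1^2 \preceq W_1$ in the PSD sense. Taking expectations preserves this ordering, so $\E[W_1^2] \preceq \E[W_1]$.

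Both $\E[W_1^2]$ and $\E[W_1]$ are symmetric and doubly stochastic, so $\mathbf{1}/\sqrt{N}$ is the top eigenvector of each with eigenvalue $1$. The min–max characterization over vectors orthogonal to $\mathbf{1}$ then gives $\lambda_2(\E[W_1^2]) \le \lambda_2(\E[W_1])$, so it suffices to upper bound the latter. Writing $\E[W_1] = I_N - b\,\E[L(\scG_1)]$, I would compute entrywise: $\E[L(\scG_1)]_{ij} = -p_i p_j\,\mathbf{1}\{(i,j)\in E\}$ off-diagonal and $\E[L(\scG_1)]_{ii} = \sum_{j: (i,j)\in E} p_i p_j$ on-diagonal, so $\E[L(\scG_1)]$ is the Laplacian of a weighted graph on $\scG$ with edge weights $p_i p_j$.

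The main step, which is really a one-line quadratic-form comparison, is to show $\E[L(\scG_1)] \succeq p_{\min}^2\, L(\scG)$. For any $y \in \R^N$,
\[
y^\top \E[L(\scG_1)]\, y = \sum_{(i,j)\in E} p_i p_j (y_i - y_j)^2 \ge p_{\min}^2 \sum_{(i,j)\in E} (y_i - y_j)^2 = p_{\min}^2\, y^\top L(\scG)\, y.
\]
Since both weighted Laplacians have kernel spanned by $\mathbf{1}$, this gives $\lambda_{N-1}(\E[L(\scG_1)]) \ge p_{\min}^2\,\lambda_{N-1}(\scG)$. Combining with the previous reductions,
\[
\rho^2 = \lambda_2(\E[W_1^2]) \le \lambda_2(\E[W_1]) = 1 - b\,\lambda_{N-1}(\E[L(\scG_1)]) \le 1 - b\,p_{\min}^2\,\lambda_{N-1}(\scG),
\]
which is~\eqref{eq:up_bound_Fiedler}. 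Substituting $b = 1/\lambda_1(\scG)$ and using the definition $\sym(\scG) = \lambda_1(\scG)/\lambda_{N-1}(\scG)$ yields~\eqref{eq:up_bound_sym}.

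I do not anticipate a hard step here: the only subtlety is justifying $\E[W_1^2] \preceq \E[W_1]$, which requires being careful that the bound $b \le 1/\lambda_1(\scG_t)$ in the definition~\eqref{eq:def_w} is enforced on the \emph{random} subgraph $\scG_t$ (handled by the monotonicity $\lambda_1(\scG_t) \le \lambda_1(\scG)$ since $\scG_t$ is an edge-subgraph of $\scG$). Everything else reduces to elementary PSD comparisons and the variational formula for $\lambda_2$.
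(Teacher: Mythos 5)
Your proof is correct, and its skeleton matches the paper's: both arguments start from $\E[W_1^2]\preceq \E[W_1]$ (using that the eigenvalues of $W_1$ lie in $[0,1]$ because $b\le 1/\lambda_1(\scG_1)\le 1/\lambda_1(\scG_1)$ holds for every realization, since $\scG_1$ is an edge-subgraph of $\scG$), reduce the problem to bounding $\lambda_2\big(I-b\,\E[\lap(\scG_1)]\big)$, and finish with the Fiedler value of $\scG$. Where you genuinely differ is in the middle comparison. The paper inserts the intermediate matrix $P\lap(\scG)P$, with $P=\mathrm{diag}(p_v)$, and argues via the chain $\E[\lap(\scG_1)]\succeq P\lap(\scG)P\succeq \pmin^2\lap(\scG)$; you instead identify $\E[\lap(\scG_1)]$ exactly as the weighted Laplacian of $\scG$ with edge weights $p_ip_j$ and compare quadratic forms edge by edge, $\sum_{(i,j)\in E}p_ip_j(y_i-y_j)^2\ge \pmin^2\sum_{(i,j)\in E}(y_i-y_j)^2$. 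Your version is not only shorter but more robust: the paper's intermediate PSD claims are delicate when the $p_v$ are not all equal (for instance, $y^\top P\lap(\scG)P\,y=\sum_{(i,j)\in E}(p_iy_i-p_jy_j)^2$ vanishes at $y_v=1/p_v$ while $\pmin^2\,y^\top\lap(\scG)\,y$ does not, so the second domination cannot hold as a PSD inequality in general), whereas your direct comparison $\E[\lap(\scG_1)]\succeq\pmin^2\lap(\scG)$ holds unconditionally. The remaining steps---monotonicity of $\lambda_2$ under the PSD order (Weyl), the identity $\lambda_2(I-bL)=1-b\lambda_{N-1}(L)$ for the Laplacian of a connected weighted graph, and the substitution $b=1/\lambda_1(\scG)$ with $\sym(\scG)=\lambda_1(\scG)/\lambda_{N-1}(\scG)$---are exactly as in the paper, so no gaps remain.
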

\begin{proof}
Recall $\rho^2 = \lambdatwo$. We have
    \begin{align}
        \lambdatwo &\leq \lambda_2(\E[W_1]) \label{eq:lam_gen1}\\
        &\leq \lambda_2\Big(I - b\,\E\big[\lap(\scG_1)\big]\Big) \nonumber \\ 
        &\leq \lambda_2\big(I-b\,P\lap(\scG) P\big) \nonumber \\ 
        &\leq \lambda_2\big(I - b\,\pmin^2 \lap(\scG)\big) \label{eq:lam_gen4}\\
        &\leq 1-b\,p_{\min}^2 \lambda_{N-1}(\scG)~, \label{eq:lam_gen5}
    \end{align}
    where $P$ is the diagonal matrix such that $P(v,v) = p_v$.
    Now, \eqref{eq:lam_gen1} holds because $W_1$ is symmetric and $W_1^2 \preceq W_1$.
    Moreover, $P \lap(\scG) P$ is also symmetric and, clearly, $P \lap(\scG) P \succeq \pmin^2 \lap(\scG)$, implying~\eqref{eq:lam_gen4}. Finally, \eqref{eq:lam_gen5} holds because $\lambda_{N-1}(G)$ is the smallest non-zero eigenvalue of $\lap(G)$.
\end{proof}
%
This choice of $b$, which is the best possible under the constraint that the gossip matrix is nonnegative, reveals that the regret is naturally controlled by the condition number of $\lap(G)$.

%
%

\paragraph{The $p$-uniform case.}
We now derive a closed-form expression for $\rho$ in the $p$-uniform case.
\begin{restatable}{theorem}{thlam}
    \label{th:lam}
    If $W_t$ is set according to~\eqref{eq:def_w}, $b = 1/\lambda_1(G)$, then in the $p$-uniform case we have
    \begin{align*}
        \rho^2 = 1 - \frac{2p^2}{\sym(G)}\left(1 - \frac{1-p}{\lambda_1(G)} - \frac{p}{2\sym(G)}\right)~.
    \end{align*}
\end{restatable}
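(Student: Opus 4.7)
My plan is to exploit the identity $W_1 = I_N - b\lap(\scG_1)$ to express $\E[W_1^2]$ as a polynomial in $\lap(G)$, so that its spectrum can be read off directly from that of $\lap(G)$. First, I decompose $\lap(\scG_1) = \sum_{e \in E} X_e L_e$, where $X_e = \mathbbm{1}\{u,v \in S_1\}$ for $e=(u,v)$ and $L_e = (\be_u - \be_v)(\be_u - \be_v)^{\top}$ is the rank-one edge-Laplacian. In the $p$-uniform case, independence of the activations yields $\E[X_e] = p^2$ and $\E[X_e X_f] = p^{|V(e)\cup V(f)|}$, which equals $p^2$, $p^3$, or $p^4$ according to whether $e=f$, $e\neq f$ share exactly one vertex, or they are vertex-disjoint. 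In particular $\E[\lap(\scG_1)] = p^2 \lap(G)$.

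Next, I compute $\E[\lap(\scG_1)^2] = \sum_{e,f \in E} \E[X_e X_f]\, L_e L_f$ using two elementary identities for edge-Laplacians: $L_e L_f = 0$ whenever $e$ and $f$ share no vertex, and $L_e^2 = 2 L_e$. The first identity annihilates all $p^4$ contributions, and the second simplifies the $e=f$ terms to $2 p^2 \lap(G)$. Recognizing $\sum_{e \neq f \text{ sharing a vertex}} L_e L_f = \lap(G)^2 - \sum_e L_e^2 = \lap(G)^2 - 2 \lap(G)$, I obtain the compact expression $\E[\lap(\scG_1)^2] = 2 p^2 (1-p)\lap(G) + p^3 \lap(G)^2$.

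Combining these gives $\E[W_1^2] = I_N - 2 b p^2\bigl(1 - b(1-p)\bigr)\lap(G) + b^2 p^3 \lap(G)^2$, which is a polynomial in $\lap(G)$ and hence diagonal in its eigenbasis. If $\lambda$ is an eigenvalue of $\lap(G)$, the corresponding eigenvalue of $\E[W_1^2]$ is $\mu(\lambda) := 1 - 2 b p^2\bigl(1 - b(1-p)\bigr)\lambda + b^2 p^3 \lambda^2$. Specializing to $b = 1/\lambda_1(G)$ and $\lambda = \lambda_{N-1}(G)$, so that $b\lambda = 1/\sym(G)$, produces exactly the closed-form claimed in the statement.

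It remains to argue that $\rho^2 = \lambda_2(\E[W_1^2])$ really equals $\mu(\lambda_{N-1}(G))$, and not $\mu$ at some other positive eigenvalue of $\lap(G)$. Since $\mu$ is a convex quadratic whose vertex sits at $\lambda^\star = (\lambda_1(G) - (1-p))/p$, and since any connected graph on $N \ge 2$ vertices satisfies $\lambda_1(G) \ge 2$ (its $N-1$ positive Laplacian eigenvalues sum to at least $2(N-1)$), a short check gives $\lambda^\star \ge \lambda_1(G)$ for all $p \in (0,1]$. Hence $\mu$ is non-increasing on $[\lambda_{N-1}(G), \lambda_1(G)]$, and its maximum over the positive spectrum is attained at $\lambda_{N-1}(G)$. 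I expect the main hurdle to be the second step: spotting the two edge-Laplacian identities and handling the bookkeeping over pairs of edges cleanly is where the calculation could get ugly; once the polynomial form is obtained, the eigenvalue analysis is essentially mechanical.
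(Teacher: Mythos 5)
Your proof is correct and follows what is essentially the only natural route: expanding $\E[W_1^2]$ as a quadratic polynomial in $\lap(G)$ via the edge-Laplacian decomposition (your identity $\E[\lap(\scG_1)^2] = 2p^2(1-p)\lap(G) + p^3\lap(G)^2$ checks out, as does the resulting closed form, which reproduces the paper's clique special case). You also correctly handle the one step that is easy to overlook, namely verifying that the convex quadratic $\mu(\lambda)$ is non-increasing on $[0,\lambda_1(G)]$ (since its vertex lies at $\lambda^\star=(\lambda_1(G)-(1-p))/p\ge\lambda_1(G)$), so that $\lambda_2(\E[W_1^2])$ is indeed attained at $\lambda_{N-1}(G)$ rather than at some other positive Laplacian eigenvalue.
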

From \Cref{th:lin_thm}, we know that the expected network regret scales with $\frac{\rho}{1-\rho}$. Using \Cref{th:lam} and some simple calculations (see appendix), we get
\[
    \frac{\rho}{1-\rho}  \le \frac{2\sym(G)}{p^2}.
\]
Combining this with \eqref{eq:second-bound}, we immediately get the following result.
\begin{cor}
\label{cor:scaling}
Assume each agent runs an instance of \algoname\ with learning rate $\eta > 0$.
If the gossip matrix $W_t$ is chosen as in~\eqref{eq:def_w} with $b = 1/\lambda_1(G)$ and $\eta$ is tuned with respect to $p$ and $N$, the expected network regret can be bounded by
\begin{equation}
\label{eq:e-r-tuned-p}
    \expRegbar = \scO\left(\frac{\sym(G)}{p}\min\left(\sqrt{N},\frac{N^{1/4}}{\sqrt p}\right)\sqrt{T}\right)
\end{equation}
for all $p\le 1$.
\end{cor}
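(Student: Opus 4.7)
The plan is to combine the explicit formula for $\rho^2$ given by \Cref{th:lam} with the generic bound~\eqref{eq:second-bound}, then re-tune the learning rate $\eta$ once the graph-dependent behavior of $\rho/(1-\rho)$ has been made explicit.

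First, I would convert \Cref{th:lam} into a clean upper bound on $\rho/(1-\rho)$. Writing
\[
A = \frac{2p^2}{\sym(G)}\bigg(1 - \frac{1-p}{\lambda_1(G)} - \frac{p}{2\sym(G)}\bigg)
\]
so that $\rho^2 = 1 - A$, the identity $1-\rho = A/(1+\rho)$ yields
\[
\frac{\rho}{1-\rho} \;=\; \frac{\rho(1+\rho)}{A} \;\le\; \frac{2}{A}.
\]
It then suffices to show that the parenthesized factor in $A$ is bounded below by a constant in the non-trivial regime. This is the ``simple calculation'' alluded to in the main text: it reduces to checking $\frac{1-p}{\lambda_1(G)} + \frac{p}{2\sym(G)} \le 1/2$, which holds whenever $\lambda_1(G)$ and $\sym(G)$ are bounded away from $1$ (the only regime in which the bound of the corollary is non-vacuous). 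The outcome is $\rho/(1-\rho) \le 2\sym(G)/p^2$.

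Next, I would substitute this into~\eqref{eq:second-bound}. The bracketed quantity becomes at most $8 + 6\min(pN,\sqrt N)\,\sym(G)/p^2$, whose second summand dominates in the non-trivial regime, so the regret bound is of order
\[
\frac{D^2}{p\eta} \;+\; \frac{L^2}{\mu}\,\eta\,\min(pN,\sqrt N)\,\frac{\sym(G)}{p^2}\,T.
\]
Tuning $\eta$ to balance the two terms (using only $p$, $N$, $T$, $\sym(G)$, $D$, $L$, $\mu$, all available under the hypotheses) yields a balanced bound of order
\[
\frac{DL}{\sqrt{\mu}}\sqrt{\frac{\sym(G)\,\min(pN,\sqrt N)}{p^3}}\,\sqrt T.
\]
Using $\sqrt{\min(pN,\sqrt N)} = \min(\sqrt{pN},N^{1/4})$ and the identity $\min(\sqrt{pN},N^{1/4})/p^{3/2} = (1/p)\min(\sqrt N, N^{1/4}/\sqrt p)$, and then invoking $\sqrt{\sym(G)} \le \sym(G)$ (valid since $\sym(G) \ge 1$), gives exactly the form in~\eqref{eq:e-r-tuned-p}.

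The main obstacle is the first step: controlling the parenthesized factor in $A$, i.e.\ arguing that the lower-order corrections $\frac{1-p}{\lambda_1(G)}$ and $\frac{p}{2\sym(G)}$ do not swallow the leading $1$. Once this algebraic bookkeeping is handled, the rest is a routine $\eta$-optimization entirely parallel to the derivation of~\eqref{eq:third-bound} from~\eqref{eq:second-bound}, now with the sharper graph-dependent expression for $\rho/(1-\rho)$ plugged in.
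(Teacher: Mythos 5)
Your route is essentially the paper's: invert the closed form of \Cref{th:lam} to get $\rho/(1-\rho)\le 2\sym(G)/p^2$, substitute into~\eqref{eq:second-bound}, and optimize $\eta$. Two remarks. First, the step you flag as the ``main obstacle'' is in fact unconditional: $\sym(G)\ge 1$ by definition and $\lambda_1(\lap(G))\ge \Delta+1\ge 2$ for any connected graph on $N\ge 2$ vertices, so $\frac{1-p}{\lambda_1(G)}+\frac{p}{2\sym(G)}\le\frac{1-p}{2}+\frac{p}{2}=\frac12$ always, and no restriction to a ``non-vacuous regime'' is needed. Second, and more substantively, your final tuning does not match the corollary's hypothesis. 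You balance the two terms with an $\eta$ that depends on $\sym(G)$; this yields the (stronger) rate $\sqrt{\sym(G)}\,p^{-1}\min(\sqrt N,N^{1/4}/\sqrt p)\sqrt T$, but the corollary explicitly requires $\eta$ to be tuned with respect to $p$ and $N$ only, and the paper assumes knowledge of the spectral radius $\lambda_1(G)$ (for setting $b$) but not of the Fiedler value, so $\sym(G)$ is not available to the learner. The paper instead reuses the $\eta$ of~\eqref{eq:third-bound}, which depends only on $p,N,T$; under that choice the first term contributes $\scO\big(\min(\sqrt N,N^{1/4}/\sqrt p)\sqrt T\big)$ and the second contributes $\scO\big(\tfrac{\sym(G)}{p}\min(\sqrt N,N^{1/4}/\sqrt p)\sqrt T\big)$, and since $\sym(G)/p\ge 1$ the stated bound follows. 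Replacing your balancing step with this fixed tuning (or noting that an unbalanced tuning still yields the claim) closes the gap; as written, your argument proves a variant of the corollary under a stronger informational assumption rather than the corollary itself.
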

This bound captures the intuition that bottlenecks in $G$ (causing a small Fiedler value or a high condition number) negatively impact the regret due to a slower propagation of the information in the network.

To better visualize the dependence on the graph structure, we study specific graphs of particular practical importance.
Specifically, we give results in the $p$-uniform case for cliques, strongly regular graphs, and grids (see \Cref{fig:fig_graph}).






\begin{figure}[t]
    \centering
    \subfloat[$36$-node clique]{%
        \includegraphics[width=0.3\columnwidth, valign=c]{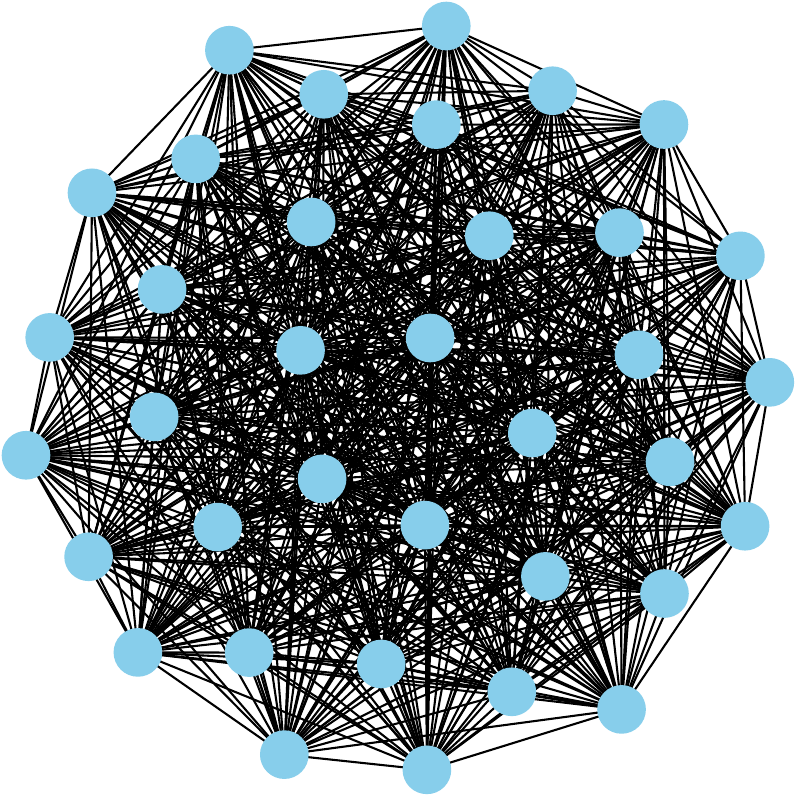}
        \label{fig:sub1}}
    \hfill
    \subfloat{%
        \includegraphics[width=0.55\columnwidth, valign=c]{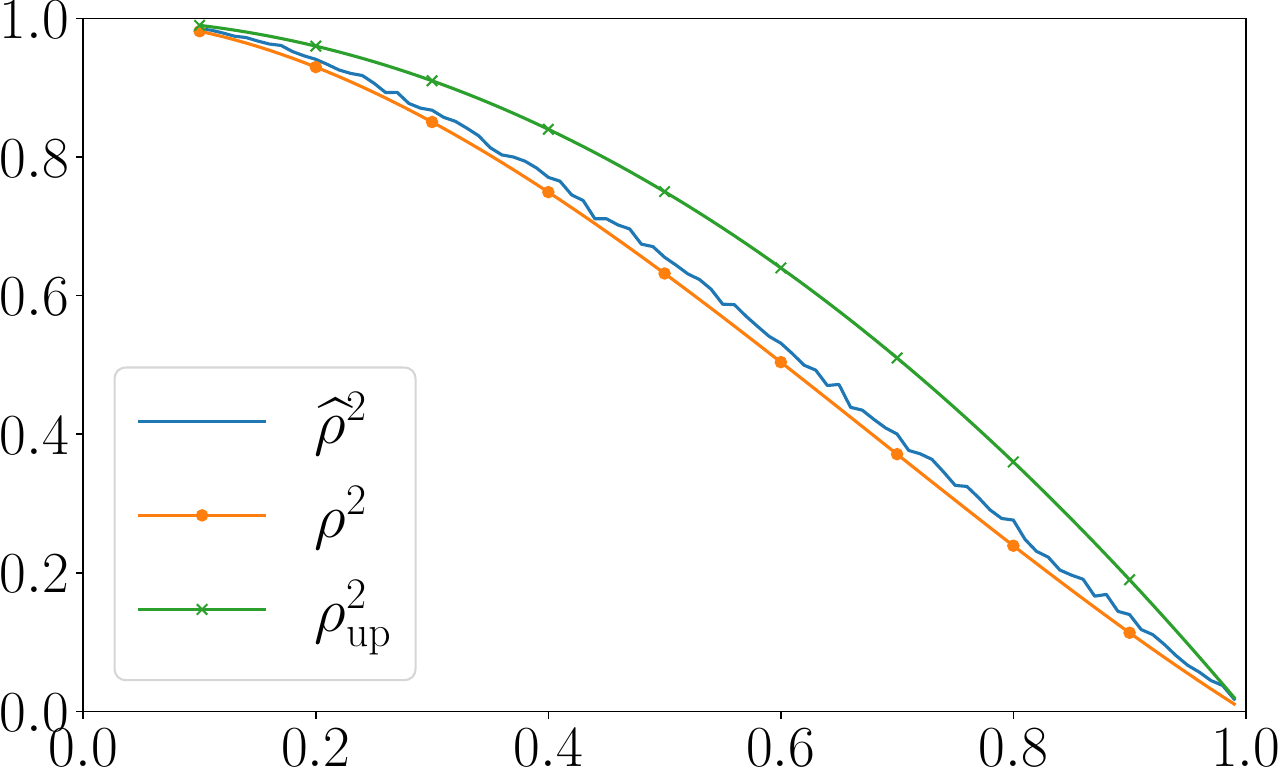}
        \label{fig:sub2}}

    \vskip\baselineskip

    \setcounter{subfigure}{1} 
    \subfloat[$6\times6$ lattice]{%
        \includegraphics[width=0.3\columnwidth,valign=c]{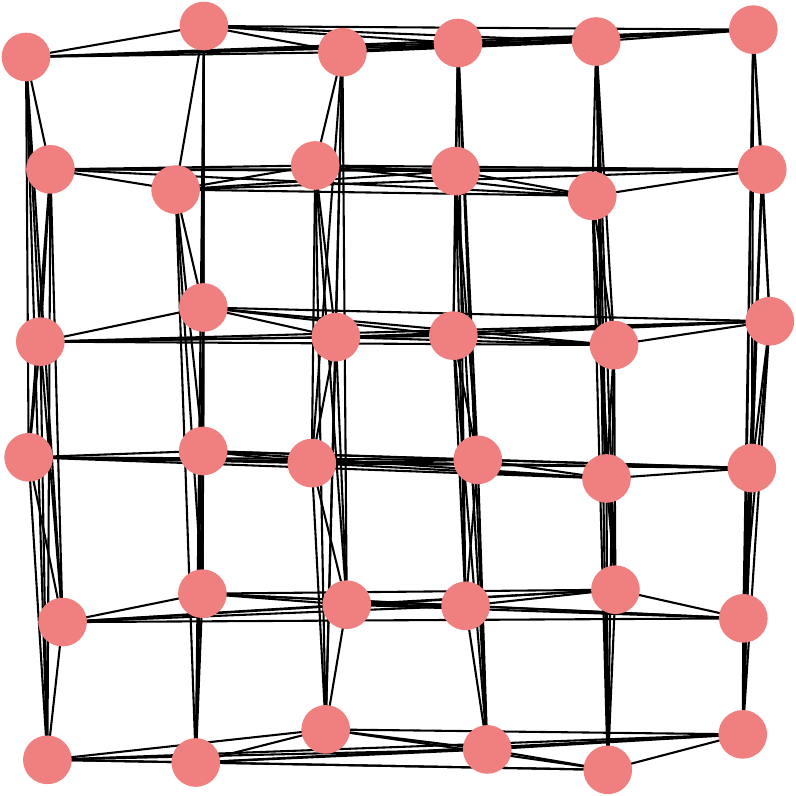}
        \label{fig:sub5}}
    \hfill
    \subfloat{%
        \includegraphics[width=0.55\columnwidth,valign=c]{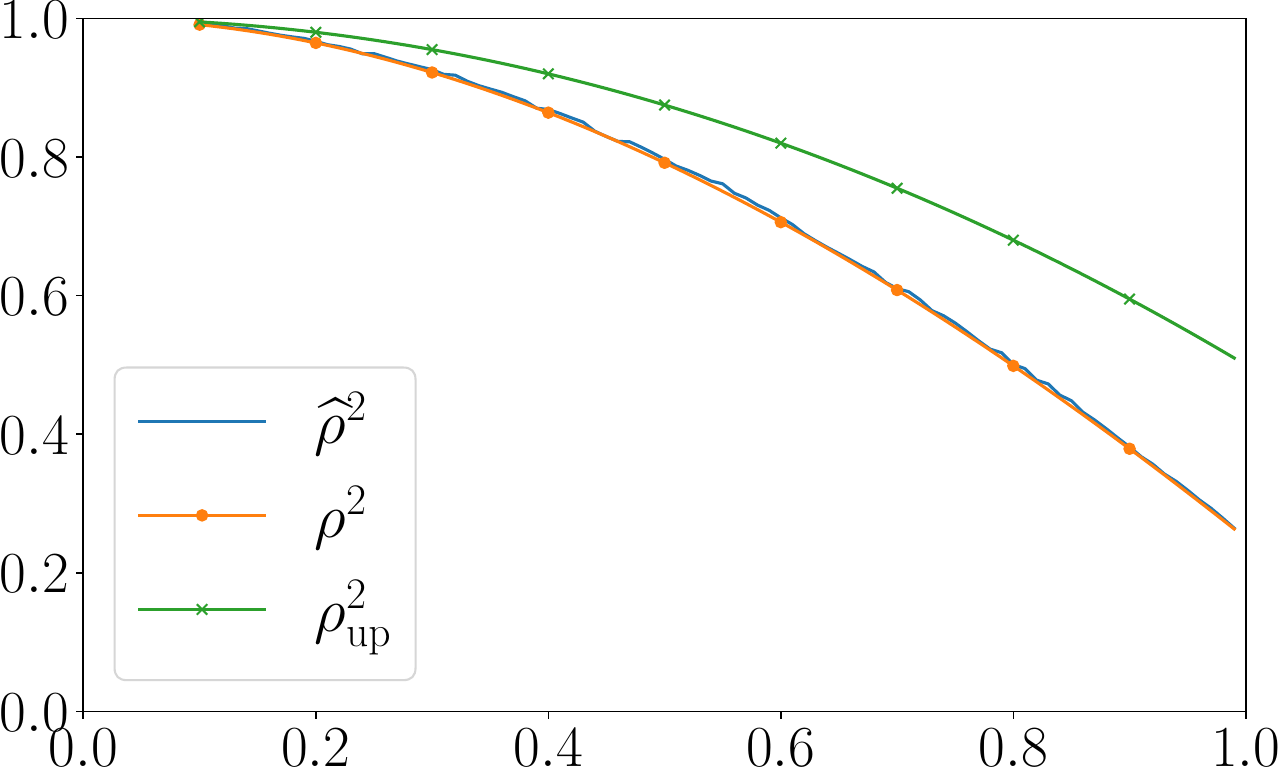}
        \label{fig:sub6}}

    \vskip\baselineskip

    \setcounter{subfigure}{2} 
    \subfloat[$6\times6$ grid]{%
        \includegraphics[width=0.3\columnwidth, valign=c]{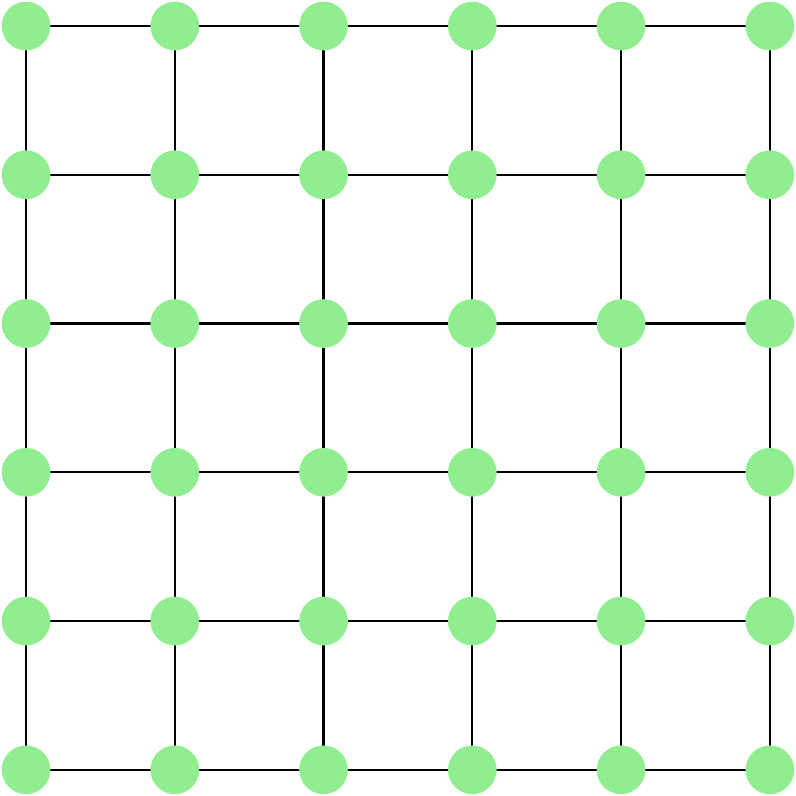}
        \label{fig:sub3}}
    \hfill
    \subfloat{%
        \includegraphics[width=0.55\columnwidth,valign=c]{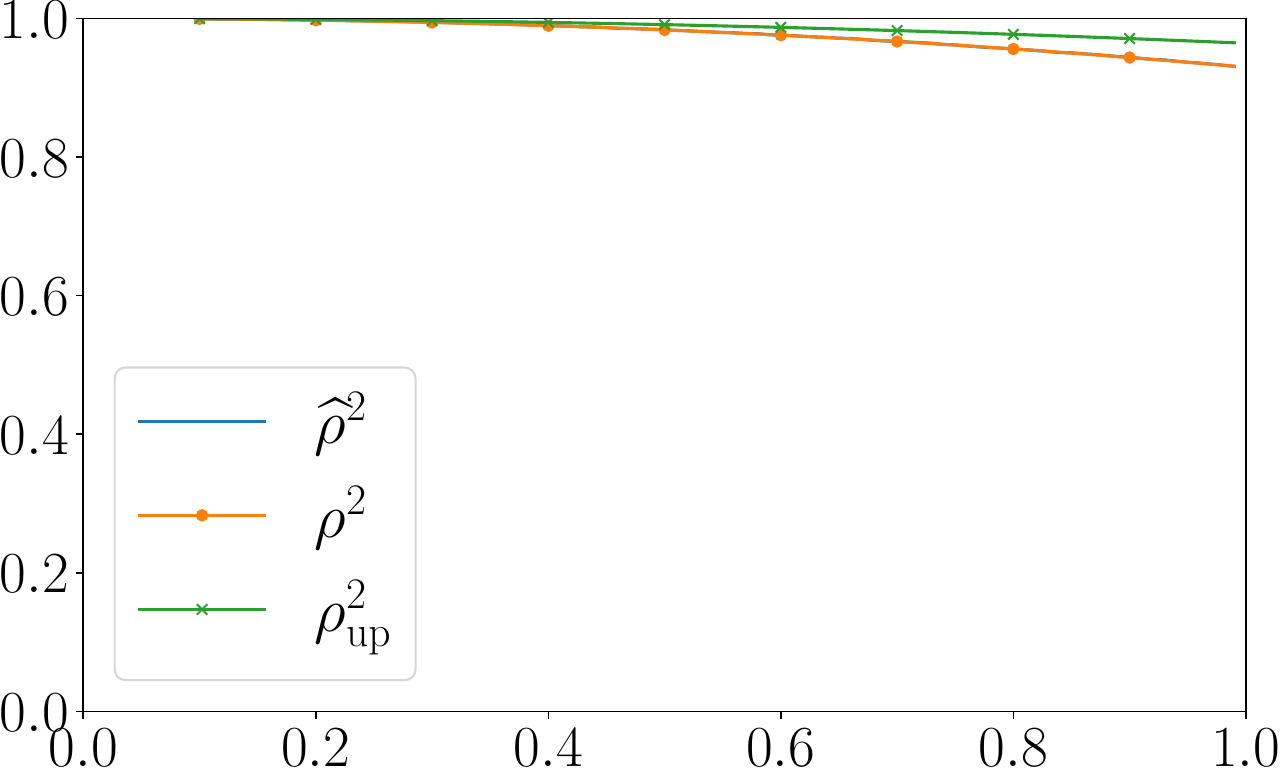}
        \label{fig:sub4}}

    \caption{Empirical estimate $\hat{\rho}$ compared to $\rho$ and the upper bound $\hat{\rho}_{\text{up}}$~\eqref{eq:up_bound_sym} for $b = 1/\lambda_1(\scG)$ plotted as a function of $p \in [0,1]$.}
    \label{fig:fig_graph}
\end{figure}

\textbf{Clique.} We have $\lambda_1(\scG) = \lambda_{N-1}(\scG) = N$ and 
\[
    \rho^2
=
    1 - 2p^2 + p^2\left(\frac{2(1-p)}{N} + p\right)~.
\]
\textbf{Strongly regular.} Let $\scG$ be strongly regular with parameters $k$ (the degree of any node), $m$ (the number of common neighbors for any two adjacent nodes), and $n$ (the number of common neighbors for any two nonadjacent nodes). Then $\lambda_1(\scG) = k-s$, $\lambda_{N-1}(\scG) = k-r$, and
\begin{equation*}
        \rho^2 \le 1-\frac{k-r}{k-s}p^2~,
\end{equation*}
where
$\begin{cases}
        r = \frac{m-n + \sqrt{(m-n)^2+ 4 (k-n)}}{2}~,
    \\
        s = \frac{m-n - \sqrt{(m-n)^2+ 4 (k-n)}}{2}~.
\end{cases}$

In particular, when $\scG$ is the lattice graph, i.e. the graph with vertices $[M]^2$ and an edge between any  two vertices in the same rows or columns (yielding  $k=2M-2$, $m=M-2$ and $n=2$), 
$\rho \le 1 - \frac 1 2 p^2$.

\textbf{$2$-dim grid.} We have $\lambda_1(\scG) = 4-4 \cos(\pi (M-1)/M)$ and $\lambda_{N-1}(G) = 2-2\cos(\pi/M)$, where $M = \sqrt{N}$ is the grid side length. Then
\begin{align*}
    \rho^2
\le
    1 - \frac{1-\cos(\pi/M)}{2-2\cos(\pi (M-1)/M)} p^2~.
\end{align*}
Note that $\frac{1-\cos(\pi/M)}{2-2\cos(\pi (M-1)/M)} \sim \frac{\pi^2}{4M^2}$, which goes to zero when $M\to\infty$.

\Cref{fig:fig_graph} shows the empirical behavior of $\rho^2$ for $b = 1/\lambda_1(\scG)$. The quantity $\rhohat$ is the second eigenvalue of $W_1^2$ averaged over $1000$ different draws of active agents, where each agent is activated with probability $p$ ranging from $0$ to $1$. We also plot the exact value $\rho^2$ (\Cref{th:lam}) and its upper bound $\rhoup$~\eqref{eq:up_bound_sym}.

\Cref{fig:fig_graph} reveals that for dense graphs (e.g., clique and lattice), $\rho^2$ decreases quickly as $p\to 1$, implying a better regret rate. For the clique we have $\rho = 0$, implying an expected regret rate of order $\sqrt{T}$, which is independent of $N$---see~\eqref{eq:second-bound}. On the other hand, in sparse graphs $\rho$ may remain high. For example, in the grid $\rho > 0.9$ for all $p$. Note also that $\rhoup$ approximates $\rho^2$ well, especially when $p$ is small.

    \section{\uppercase{random edges}}
\label{sec:E-R}

We now study a setting where, after agents are activated, edges between pairs of active agents are independently deleted with probability $1-q$.
More specifically, given a graph $G = (V,E)$, the active graph $G_t = (S_t,E_t)$ at time $t$ is defined by $\Pr\big((i,j)\in E_t \big) = q\,\Pr(i,j \in S_t) \Ind{(i,j) \in E}$. When $\Pr(i,j \in S_t) = 1$ for all distinct $i,j \in V$ we recover the model of \citet{lei2020online}. In the $p$-uniform case, we write $G_t \sim \myG(G,p,q)$. Note that $G_1,G_2,\ldots$ is i.i.d.\ because $S_1,S_2,\ldots$ is i.i.d.; moreover, if $W_t$ is chosen as in~\eqref{eq:def_w}, then $W_1,W_2,\ldots$ is also an i.i.d.\ sequence. Using~\eqref{eq:second-bound}, we can prove the following result.
%
%
\begin{restatable}{cor}{erdosrenyi}
\label{erdosrenyi}
Assume each agent runs an instance of \algoname\ with learning rate $\eta > 0$.
If the gossip matrix $W_t$ is chosen as in~\eqref{eq:def_w} with $b = 1/\lambda_1(G)$, then
\begin{align*}
  \rho^2 = 1 - \frac{2p^2q}{\sym(G)}\left(1 - \frac{1-pq}{\lambda_1(G)} - \frac{pq}{2\sym(G)}\right)~. 
\end{align*}
By tuning $\eta$ with respect to $p$ and $N$, the expected network regret on $G_1,G_2,\ldots$ drawn i.i.d.\ from $\myG(G,p,q)$ can be bounded by
\begin{equation}
\label{eq:e-r-tuned}
    \expRegbar = \scO\left(\frac{\sym(G)}{pq}\min\left(\sqrt{N},\frac{N^{1/4}}{\sqrt p}\right)\sqrt{T}\right)~.
\end{equation}
\end{restatable}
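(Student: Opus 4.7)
The plan is to prove Corollary \ref{erdosrenyi} in three steps, closely paralleling the derivation of Theorem \ref{th:lam} and Corollary \ref{cor:scaling}.

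First, I would establish the closed-form expression for $\rho^2$. Under $\myG(G,p,q)$, each potential edge $(i,j) \in E$ belongs to $E_t$ iff $i,j \in S_t$ and an independent Bernoulli$(q)$ coin $B_{ij}$ comes up heads, so $\Pr((i,j) \in E_t) = p^2 q$ whenever $(i,j) \in E$. The random Laplacian $\lap(\scG_t)$ then has the same structure as in the proof of Theorem \ref{th:lam}, except that every edge indicator carries an additional independent factor $B_{ij}$. Expanding $W_t^2 = (I - b\lap(\scG_t))^2$ and taking expectations, every single-edge probability $p^2$ in the analysis of \Cref{th:lam} is replaced by $p^2 q$, while pairs of edge indicators that share one common endpoint pick up a factor $q^2$ (two distinct and independent $B$'s), and pairs with no common endpoint pick up $p^4 q^2$. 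Propagating these substitutions through the diagonalization argument that relates $\lambda_2(\E[W_t^2])$ to the Laplacian spectrum of $G$ yields the claimed formula.

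Second, I would convert the formula into a bound on $\tfrac{1}{1-\rho}$. Since $pq \le 1$ and $\sym(G) \ge 1$, the factor inside the parentheses in the formula for $\rho^2$ is bounded below by a positive constant (a small case analysis when $pq$ is close to one handles the boundary), so $1 - \rho^2 \gtrsim p^2 q/\sym(G)$, and combining with $1 - \rho \ge (1-\rho^2)/2$ gives $\tfrac{\rho}{1-\rho} = O(\sym(G)/(p^2 q))$. This is the exact Erdős–Rényi analogue of the bound $\tfrac{\rho}{1-\rho} \le 2\sym(G)/p^2$ used to derive Corollary \ref{cor:scaling}.

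Third, since $S_1,S_2,\ldots$ and the edge-deletion coins are drawn i.i.d.\ across rounds, the matrices $W_1,W_2,\ldots$ are i.i.d., so \Cref{th:lin_thm} applies. I would then plug the bound on $\tfrac{\rho}{1-\rho}$ into \eqref{eq:second-bound} and optimize $\eta$ in $p$ and $N$ exactly as in the derivation of Corollary \ref{cor:scaling}: balancing $\tfrac{D^2}{p\eta}$ against $\tfrac{L^2 \eta T}{\mu}\min(pN,\sqrt{N})\tfrac{\rho}{1-\rho}$ and taking the minimum over the two regimes $p \le 1/\sqrt N$ and $p > 1/\sqrt N$ produces the stated order in \eqref{eq:e-r-tuned}.

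The main obstacle lies in the first step. In the $p$-uniform case the only source of randomness is $S_t$, whereas here pairs of edge indicators mix correlations induced by shared endpoints (through $S_t$) with independent Bernoulli$(q)$ factors; one must carefully distinguish the diagonal contribution to $\E[\lap(\scG_t)^2]$ where a single edge appears twice (one $B$, contributing $p^2 q$), from the off-diagonal contribution where two edges share one endpoint (two independent $B$'s, contributing $p^3 q^2$), from the contribution where two disjoint edges appear (contributing $p^4 q^2$). Verifying that this bookkeeping produces exactly the substitutions $p^2 \mapsto p^2 q$ in the leading factor and $p \mapsto pq$ inside the correction is the only nontrivial calculation beyond what already appears in the appendix proof of \Cref{th:lam}.
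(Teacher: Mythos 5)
Your proposal is correct and follows essentially the same route as the paper: expanding $\E[W_1^2]=I-2b\,\E[\lap(\scG_1)]+b^2\,\E[\lap(\scG_1)^2]$ with each edge indicator now carrying an independent Bernoulli$(q)$ factor (single edges contribute $p^2q$, pairs sharing an endpoint $p^3q^2$, and disjoint pairs do not matter since the corresponding elementary Laplacians have zero product) gives a polynomial in $\lap(\scG)$ whose value at $\lambda_{N-1}(\scG)$ is exactly the stated $\rho^2$, after which one repeats the \Cref{cor:scaling} argument with $p^2$ replaced by $p^2q$. The only caveat—already present in \Cref{cor:scaling} itself—is that balancing $\frac{D^2}{p\eta}$ against the term containing $\frac{\rho}{1-\rho}$ makes $\eta$ depend on $\sym(\scG)$ and $q$ as well (yielding $\sqrt{\sym(\scG)}/(p\sqrt{q})\le\sym(\scG)/(pq)$), whereas the purely $(p,N)$-dependent tuning of \eqref{eq:third-bound} would only give $\sym(\scG)/(p^2q)$.
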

%

    \section{\uppercase{Experiments}}
We empirically evaluate \algoname\ on synthetic data and compare it with \dogd\ \citep{lei2020online}.
While \algoname\ can deal with arbitrary values of $p$ and $q$, \dogd\ is designed for settings with $p=1$ (agents are always active) and $0 < q \le 1$ (edges of $G$ are active with probability $q$). To run \dogd\ when $p < 1$ we feed a zero gradient vector to instances run by agents that are inactive on that round.

Our synthetic data are generated based on the distributed linear regression setting of \citet{yuan2020distributed}. In particular, the agents' decision space $\sX$ is the $10$-dimensional Euclidean ball of radius $2$ centered in the origin. The local loss functions are $\ell_t(v,\bx) = \frac{1}{2}\big(\langle \bw_t(v),\bx\rangle - y_t(v)\big)^2$ for all $v\in V$ and $\bx\in\sX$. The feature vectors $\bw_t(v)$ are generated independently, by picking each coordinate independently and uniformly at random in $[-1,1]$. The labels $y_t(v)$ are generated according to $y_t(v) = \varepsilon_t(v)$ for $1 \le v < \lceil N/2 \rceil$ and $y_t(v) = \langle \bw_t(v),\bone\rangle + \varepsilon_t(v)$ for the remaining agents, where $\varepsilon_t(v)$ is independent Gaussian noise (zero mean and unit variance). Hence, the local losses of half of the agents are random noise.

Each of the following experiments is run with $|V| = N = 36$ and $T = 1000$. Plots are averages over $20$ repetitions, where repetitions use the same labels and feature vectors and only agent (and possibly edge) activations are drawn afresh in each repetition. Both algorithms are tuned according to the theoretical specifications (ignoring constant factors): we set $\eta = \big(p \min(p N, \sqrt N)T\big)^{-1/2}$ for \algoname\ (see \Cref{th:lin_thm}) and $\eta = N^{-1/4}T^{-1/2}$ for \dogd.

\begin{figure}
    \centering
    \subfloat[Clique $p=0.5, q=1$.]{%
    \centering
        \includegraphics[width=0.23\textwidth]{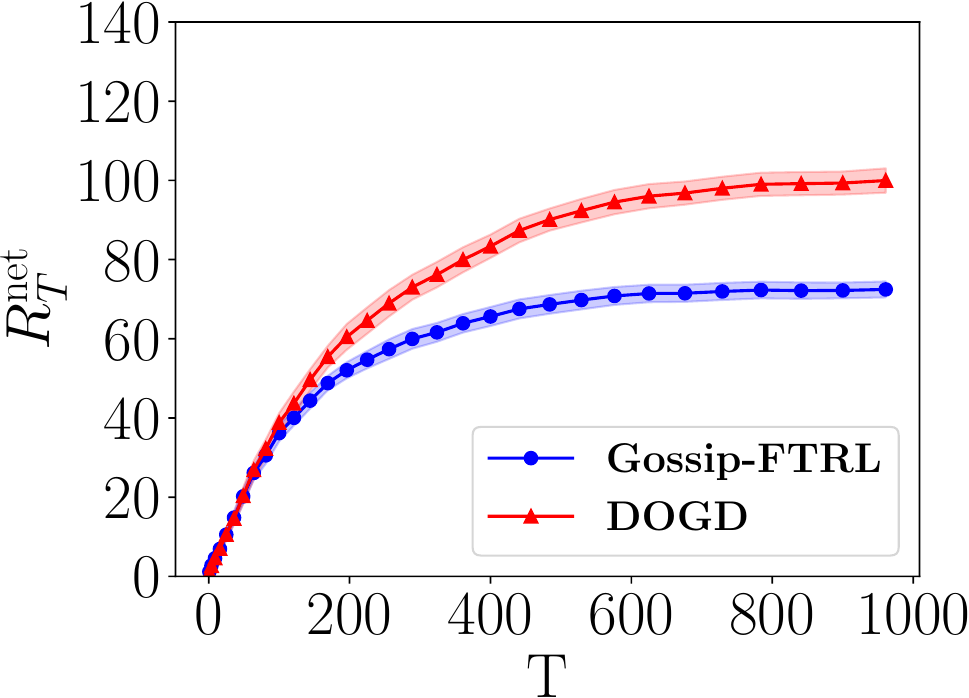}
        \label{fig:DFTRL.5-1}}
         \hfill
        \subfloat[Grid $p=0.5, q=1$.]{%
        \centering
        \includegraphics[width=0.23\textwidth]{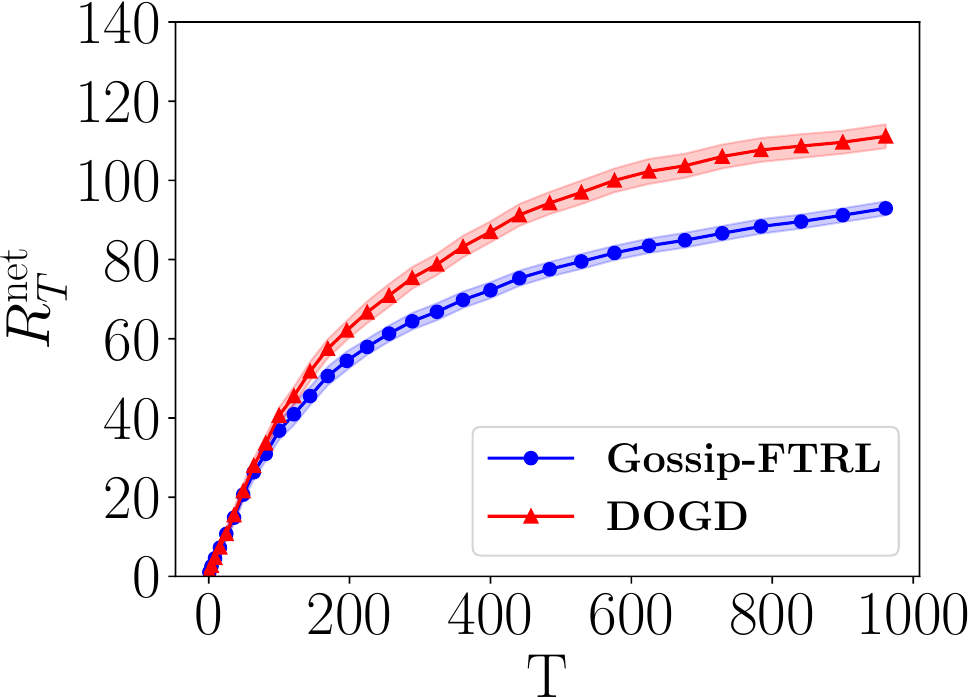}
        \label{fig:DFTRL.5-2}}

        \subfloat[Clique $p=0.5,q=0.05$.]{%
    \centering
        \includegraphics[width=0.23\textwidth]{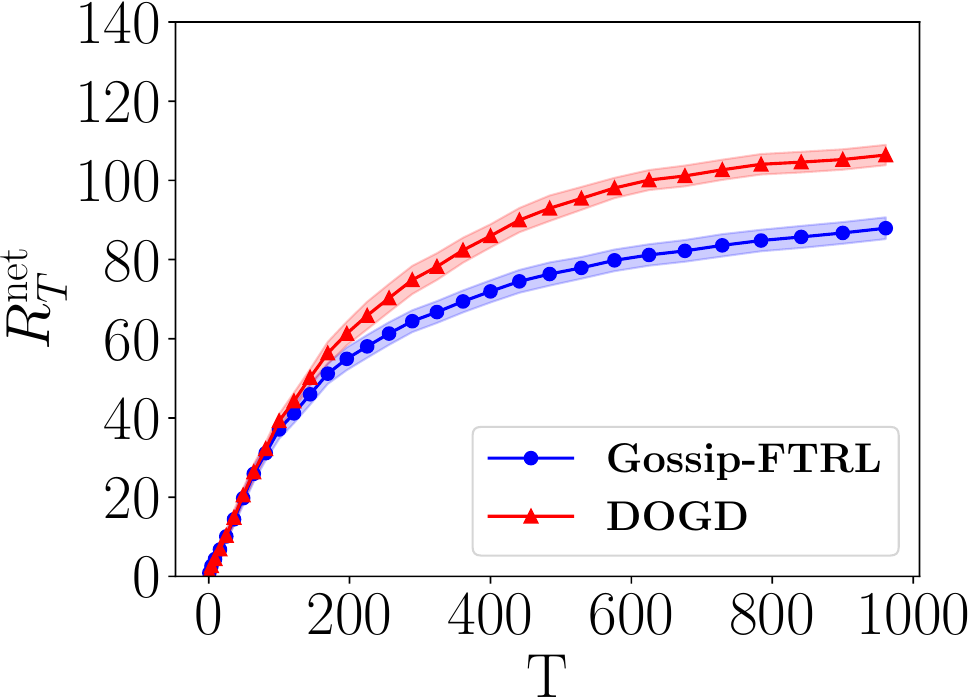}
        \label{fig:DFTRL.5-3}}
         \hfill
        \subfloat[Grid $p=0.5,q = 0.5$.]{%
        \centering
        \includegraphics[width=0.23\textwidth]{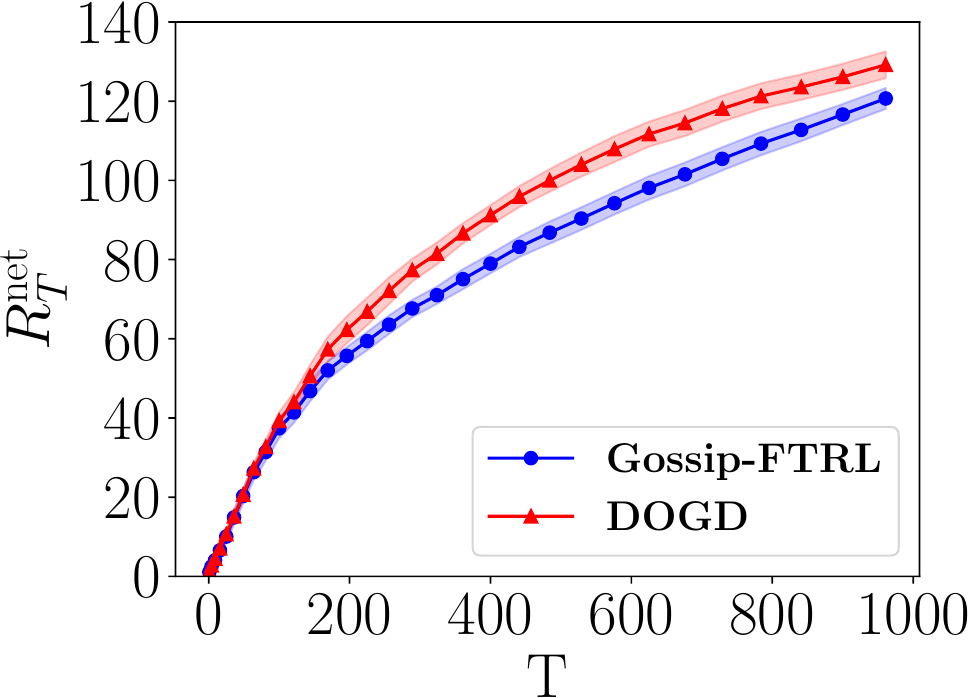}
        \label{fig:DFTRL.5-4}}
    \caption{Growth over $T=1000$ steps of the network regret of \algoname\ and \dogd on a clique and on a grid for $N=36$ and for different choices of $p,q$.}
\end{figure}

Our experiments show that \algoname\ performs consistently better than \dogd, although the difference is not huge. Both algorithms are surprisingly robust to sparsity induced by low values of $q$ when $G$ is dense (\Cref{fig:DFTRL.5-1} and \Cref{fig:DFTRL.5-3}). When $G$ is sparse though, the regret goes up much quickly as $q$ becomes smaller (\Cref{fig:DFTRL.5-2} and \Cref{fig:DFTRL.5-4}). Figure~\ref{fig:DFTRL_3D} shows the behavior of \algoname\ and \dogd\ on a grid for pairs $(p,q)$ in the set $\{0.4,0.6,0.8\}^2$. Figure~\ref{fig:DFTRL_TP-2} shows that, for \algoname, $\Regbar_T$ scales approximately with $\frac{1}{p^2}$ as predicted by \eqref{eq:third-bound}---at least for sufficiently small values of $p$---and \dogd\ exhibits a similar behavior. Finally, Figure~\ref{fig:DFTRL_two_cliques} shows the impact of $\lambda_{N-1}(G)$ on the network regret of \algoname. The regret decreases as $\lambda_{N-1}(G)$ is increased by adding more edges to the bottleneck between the two cliques.

\begin{figure}
\centering         
\vspace*{-5mm}\includegraphics[width=0.4\textwidth]{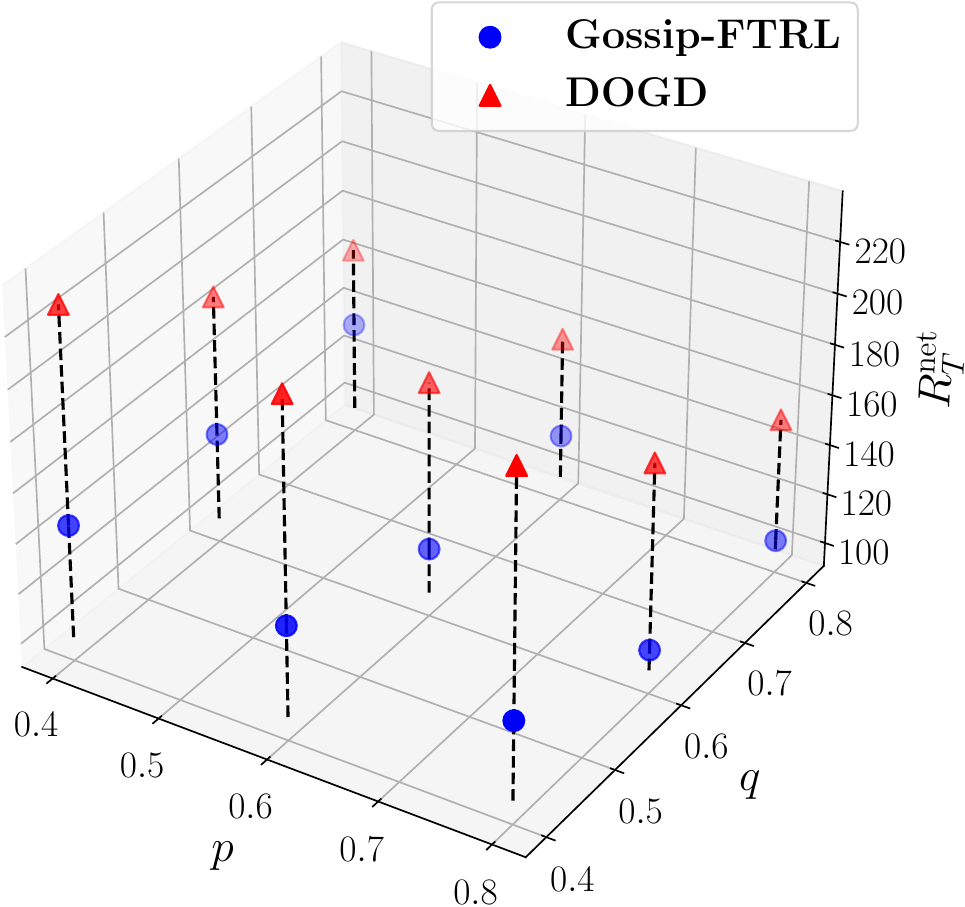}
\caption{
Network regret of \algoname\ and \dogd after $T=1000$ steps on a grid with $N=36$. 
}\label{fig:DFTRL_3D}
\end{figure}

\begin{figure}
    \centering
    \vspace*{-5mm}\includegraphics[width=0.4\textwidth]{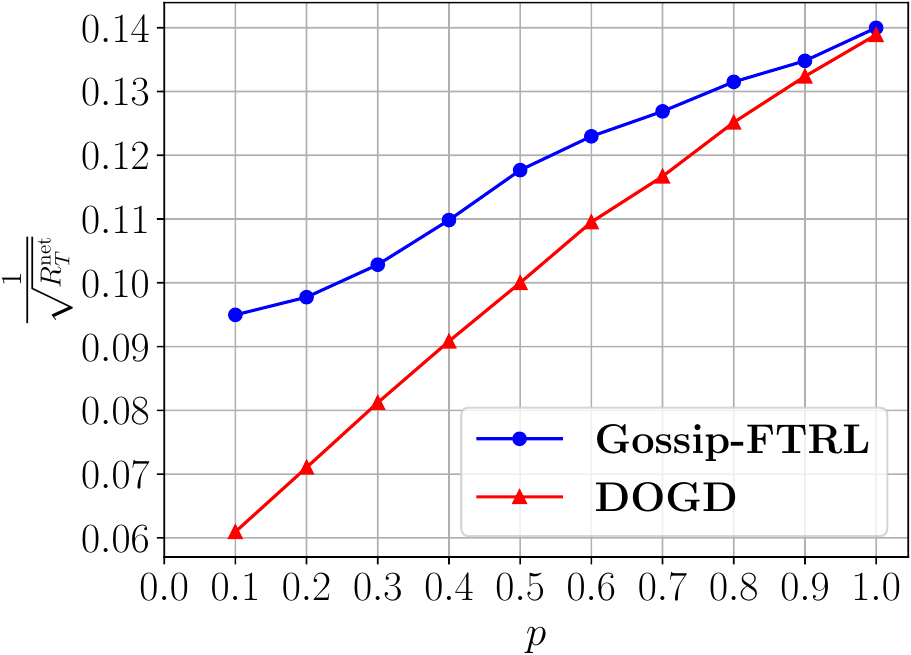}
    \caption{Plot of $\big(\Regbar_T\big)^{-1/2}$ for \algoname\ on a clique for $p \in [0,1]$ and $T=1000$.}
    \label{fig:DFTRL_TP-2}
\end{figure}

\begin{figure}
    \centering
    \vspace*{-5mm}\includegraphics[width=0.4\textwidth]{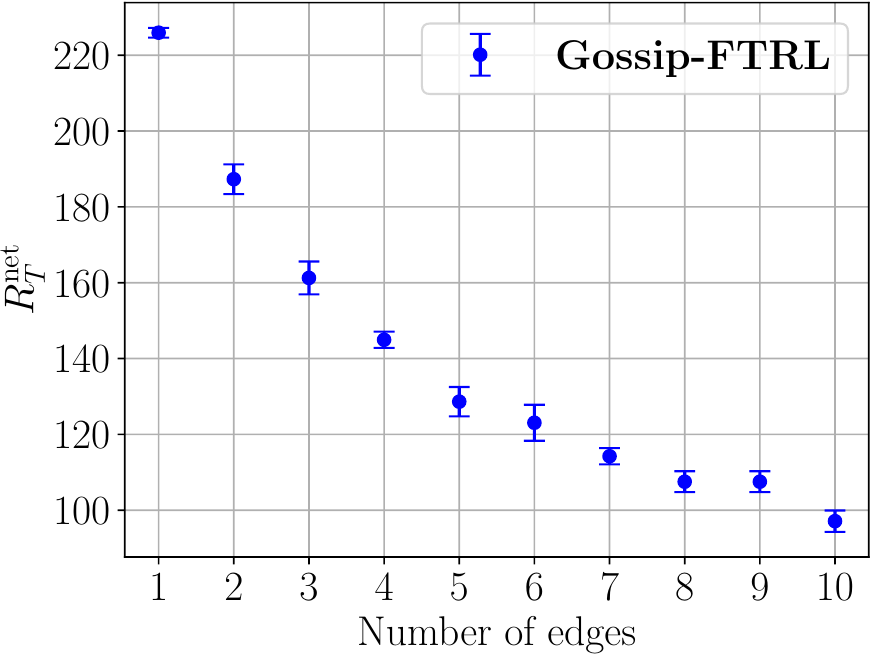}
    \caption{Network regret of \algoname\ after $T=1000$ steps when $p=0.5$, $q=1$, and $G$ is made up by two cliques joined by a varying number of random edges.}
    \label{fig:DFTRL_two_cliques}
\end{figure}

    \section{\uppercase{open problems}}
There is a gap between our upper bounds and the lower bound of \Cref{th:lb}. When $p=1$, a distributed algorithm using accelerated gossiping techniques matches the lower bound \citep{pmlr-v247-wan24a}. It is unclear if the same techniques could be applied in our more general setting, where $1/N < p \le 1$, to improve on our results. Finally, our bounds require tuning based on preliminary knowledge of $p$ in the $p$-uniform case, or $\pmin,\pbar$ in the general case. It is unclear whether we could get similar results when this information is unavailable.

\subsubsection*{Acknowledgements}
This work was done when JA was research assistant at Università degli Studi di Milano, Italy.
All the authors acknowledge the financial support from the FAIR (Future Artificial Intelligence Research) project, funded by the NextGenerationEU program within the PNRR-PE-AI scheme (M4C2, investment 1.3, line on Artificial Intelligence) and the One Health Action Hub (1H-Hub) within the PSR-LINEA6 (MUR, DM 737/2021). JA and NCB also acknowledge the financial support from the EU Horizon CL4-2021-HUMAN-01 research and innovation action under grant agreement 101070617, project ELSA (European Lighthouse on Secure and Safe AI). 

    \bibliography{ref.bib}
    \clearpage
    \appendix
    \onecolumn
    \section{Additional related works}

Distributed optimization (DO), now central to federated learning, dates back to the  work of \cite{bertsekas1991some}, originally applied to parallel computation. 
Much of the research in DO focuses on gossip algorithms, introduced by \cite{boyd2005gossip, boyd2006randomized} to address the distributed averaging problem, especially in scenarios where communication is expensive.
These algorithms involve randomly selecting a neighbor for information exchange, or \textit{gossiping}.
The concept later expanded to include weighted averaging of information from all neighbors using weights collected in a gossip matrix.
\cite{nedic2010constrained} extended gossip methods to distributed optimization, combining projected gradient descent with gossip-based averaging of iterates.
Typically, the convergence rate of gossip algorithms is inversely related to the spectral gap of the gossip matrix.

A key constraint frequently considered in distributed optimization is that the algorithm should be robust to random network topologies.
This can arise not only from unstable communication channels \citep{nedic2014distributed}, but randomization can also be leveraged to reduce communication costs while preserving performance \citep{lei2020online}.
Other constraints, considered in the literature but less relevant to this work, include event-related communication and time delays \citep{yang2019survey}.
Recent advances in DO also concern accelerated gossip algorithms that allow for accelerated rates with respect to the number of agents \citep{pmlr-v247-wan24a}.

In the online DOO setting, \cite{yan2012distributed} proposed a (sub)gradient descent algorithm with regret bounds of $\mathcal O (\sqrt T)$ for the convex and $\mathcal O (\log T)$ for the strongly convex case.
\cite{hosseini2013online} later introduced an online dual-averaging algorithm, also achieving $\mathcal O (\sqrt T)$ regret for convex losses.
\cite{yuan2021distributed} extended this to long-term constraints.
Time varying graph networks were considered by \cite{hosseini2016online, mateos2014distributed}, who proved regret rates under the assumption that the union of communication networks over any $m$ time steps is strongly connected.
\cite{lei2020online} studied the special case where communication networks are Erdös-Rényi graphs, in which each edge has a probability  $q$ of existing at each round.
They proposed a gradient descent algorithm and proved regret upper bounds for the convex and strongly convex case, also extending their result to the bandit feedback framework---see also \citep{shi2021federated}.
%

In this work, we address the problem of device unavailability, a topic explored in federated learning from various angles. 
For example, availability patterns, such as diurnal cycles, can violate the assumption of data independence, as active agents may disproportionately represent certain populations (e.g., by geographic location) \citep{eichner2019semi, amiri2021convergence}.
%
%
Device unavailability has not been addressed in the context of DOO, except indirectly in \citep{hosseini2016online}, which focuses on time-varying graphs, thus tackling the case in which isolated devices are unavailable for communication.
Our approach differs in that inactive agents do not have an associated loss function and do not contribute to the global loss. \citet{raginsky2011decentralized} consider a notion of information structure replacing the communication network. However, their results are based on a specific linear structure and a horizon-dependent communication radius within which agents can freely exchange information.
%

\section{Preliminary results}

\begin{lemma}[Regret decomposition]
    \label{lem:decomp}
    The network regret
    \[\Regbar_T = \sum_{t=1}^T \frac{1}{|S_t|} \sum_{v\in S_t} \ellbar_t\big(S_t,x_t(v)\big)
    -
    \min_{x \in \sX} \sum_{t=1}^T \ellbar_t(S_t,x)\]
    can be decomposed in the following way
    \begin{align*}
        \Regbar_T  \leq \underbrace{ 3 \sum_{t=1}^{T} \sum_{u \in V} \frac{\Ind{u\in S_t}}{|S_t|}L \left \| x_t(u) - y_t\right\|}_{(A)}
        +     \underbrace{\sum_{t=1}^{T} \sum_{v \in V} \frac{\Ind{v\in S_t}}{|S_t|} \vp{\nabla \loss_{t}(x_t(v),v)}{y_t - x^{*}}}_{(B)}\, ,
    \end{align*}
    for any $y_t\in \mathcal X$, with the convention that $0/0 = 0$ when $|S_t|=0$.
\end{lemma}
This will be particularly useful when setting $y_t$ as the prediction of a omniscient agent knowing the gradients of all incurred losses up to time $t-1$. In this case, Term $(B)$ is the part of the regret related to the loss incurred by the prediction of the omniscient agent, and Term $(A)$ is the part of the regret related to the deviations with respect to these predictions.

\begin{proof}
    By definition of the regret,
    \begin{align*}
        \Regbar_T =
        &\sum_t^{T} \sum_{u \in V} \sum_{v \in V} \frac{1}{|S_t|^2}\lr{\loss_t(x_t(u),v) - \loss_t(x^{*}, v)}\Ind{u\in S_t} \Ind{v \in S_t}
        \\&= \sum_{t=1}^{T} \sum_{u \in V} \sum_{v \in V} \frac{1}{|S_t|^2}\lr{\loss_t(x_t(u),v) - \loss_t(y_t,v) +\loss_t(y_t,v) - \loss_t(x^{*}, v)}  \times \Ind{u\in S_t}\Ind{v \in S_t}
        \\&\leq\sum_{t=1}^{T} \sum_{u \in V} \sum_{v \in V}  \frac{1}{|S_t|^2}\lr{L \left \| x_t(u) - y_t\right\|+\loss_t(y_t,v) - \loss_t(x^{*}, v)}
        \times \Ind{u\in S_t} \Ind{v \in S_t}\,,
    \end{align*}
    because $\loss_t(\cdot, v)$ is $L$-Lipschitz over the set $\mathcal{X}$ w.r.t the norm $\|\cdot\|$, i.e. $|\loss_t(x, v)-\loss_t(y, v)| \leq L\|x-y\|, \forall ~ x, y \in X$.
    Next, because we need to introduce individual gradients, we add and remove each $\loss_t({x}_t,v)$:
    \begin{align*}
        \Regbar_T
        &\leq
        \sum_{t=1}^{T} \sum_{u \in V} \sum_{v \in V} \lr{L \left \| x_t(u) - y_t\right\|+\loss_t(y_t,v) - \loss_t(x_t(v), v) + \loss_t(x_t(v), v) - \loss_t(x^{*}, v)}
        \times \frac{1}{|S_t|^2}  \Ind{u\in S_t} \Ind{v \in S_t}
        \\&\leq
        \sum_{t=1}^{T} \sum_{u \in V} \sum_{v \in V}  \lr{L \left \| x_t(u) - y_t\right\|+L \left \| x_t(v) - y_t\right\| + \loss_{t}(x_t(v), v) - \loss_t(x^{*}, v)}
        \times  \frac{1}{|S_t|^2} \Ind{u\in S_t} \Ind{v \in S_t}\,,
    \end{align*}
    where we used again the Lipschitzness of the loss functions.
    Then by convexity of $\loss_t(\cdot, v)$,
    \begin{align*}
        \Regbar_T\leq
        \sum_{t=1}^{T} \sum_{u \in V} \sum_{v \in V}
        \frac{1}{|S_t|^2}\lr{L \left \| x_t(u) - y_t\right\|+L \left \| x_t(v) - y_t\right\| + \vp{\nabla \loss_{t}(x_t(v),v)}{x_t(v) - x^{*}}}
        \times \Ind{u\in S_t} \Ind{v \in S_t}\,,
    \end{align*}
    which can be rewritten as
    \begin{align*}
        \Regbar_T& \leq  \sum_{t=1}^{T} \sum_{u \in V} \frac{\Ind{u\in S_t}}{|S_t|}L \left \| x_t(u) - y_t\right\|
        + \sum_{t=1}^{T} \sum_{v \in V} \frac{\Ind{v\in S_t}}{|S_t|}L \left \| x_t(v) - y_t\right\|
        \\
        & \quad +
        \sum_{t=1}^{T} \sum_{v \in V} \frac{\Ind{v\in S_t}}{|S_t|} \vp{\nabla \loss_{t}(x_t(v),v)}{x_t(v) - x^{*}}
        \\& = 2 \sum_{t=1}^{T} \sum_{u \in V} \frac{\Ind{u\in S_t}}{|S_t|}L \left \| x_t(u) - y_t\right\|
        + \sum_{t=1}^{T} \sum_{v \in V} \frac{\Ind{v\in S_t}}{|S_t|} \vp{\nabla \loss_{t}(x_t(v),v)}{x_t(v) -y_t + y_t - x^{*}}
        \\&\leq 2 \sum_{t=1}^{T} \sum_{u \in V} \frac{\Ind{u\in S_t}}{|S_t|}L \left \| x_t(u) - y_t\right\|
        +\sum_{t=1}^{T} \sum_{v \in V} \frac{\Ind{v\in S_t}}{|S_t|} L \left \| x_t(v) - y_t\right\|
        \\
        & \quad+ \sum_{t=1}^{T} \sum_{v \in V} \frac{\Ind{v\in S_t}}{|S_t|} \vp{\nabla \loss_{t}(x_t(v),v)}{y_t - x^{*}}\, ,
    \end{align*}
    again by the Lipschitzness of the losses.
    Finally,
    \begin{equation}
        \label{eq:regret_decomposition}
        \Regbar_T\leq \underbrace{ 3 \sum_{t=1}^{T} \sum_{u \in V} \frac{\Ind{u\in S_t}}{|S_t|}L \left \| x_t(u) - y_t\right\|}_{(A)}
        +
        \underbrace{\sum_{t=1}^{T} \sum_{v \in V} \frac{\Ind{v\in S_t}}{|S_t|} \vp{\nabla \loss_{t}(x_t(v),v)}{y_t - x^{*}}}_{(B)}
    \end{equation}
    concluding the proof.
\end{proof}

ce\begin{lemma} \label{lem:lambW}   Assuming that for $k=1\ldots T$, $W_k$ are doubly stochastic matrices and i.i.d., we have, $\forall v \in V$, $\forall s,t \in [T]$ such that $ t>s$,
 \begin{align}
        \mathbb{E}\left[\left(W_{t} \cdots  W_{s+1} e_v - \frac \bone N \right)^T \left(W_{t} \cdots  W_{s+1} e_v - \frac \bone N\right)  \right]
        &\leq {e_v^T e_v} \left\| \mathbb{E}[W_1 W_1^{\top}] - \frac 1 N \bone \bone^\top \right\|_{\textnormal{op}}^{t-s} \nonumber\\
        &\leq \lambdatwo^{{t-s}}\,. \label{eq:induc}
    \end{align}   
\end{lemma}
    This can be derived exactly as in the proof of \citep[Lemma 2]{lei2020online}.
    For completeness, we provide a quick justification.
\begin{proof}
    Let $\tilde W_{k} =W_{k}- \frac 1 N \bone \bone^\top$ and assume
    \[
    \E\left[\left\|W_{k-1} \cdots  W_{s+1} e_v - \frac \bone N \right\|_2^2\right]\leq e_v^T e_v \left\| \mathbb{E}[W_1 W_1^{\top}] - \frac 1 N  \bone \bone^\top \right\|_{\textnormal{op}}^{k-s-1} \qquad \text{for some $k-1>s$.}
    \]
    Let $\mathcal{F}_{k-1}$ be the $\sigma$-algebra generated by all random events up to time $k-1$. We have that
    \begin{align*}
        \E\left[\left\|W_{k}^{\top} \cdots  W_{s+1}^{\top} e_v - \frac \bone N \right\|_2^2\right]
        &=\E\left[e_v^T \tilde W_{s+1}^{\top} \cdots \tilde W_{k-1}^{\top} \tilde W_{k}^{\top}\tilde W _k \tilde  W_{k-1} \cdots \tilde W_{s+1} e_v  \right]\\
        &=\E\left[e_v^T \tilde W_{s+1}^{\top} \cdots \tilde W_{k-1}^{\top}  \E[\tilde W_{k}^{\top}\tilde W _k  \mid \mathcal{F}_{k-1}] \tilde  W_{k-1} \cdots  \tilde W_{s+1} e_v  \right] \\
        &=\E\left[e_v^T \tilde W_{s+1}^{\top} \cdots \tilde W_{k-1}^{\top}  \E[\tilde W_{1}^{\top}\tilde W _1] \tilde  W_{k-1} \cdots  \tilde W_{s+1} e_v  \right] \tag{by independence of $W_k$}\\
        & \leq   \left\| \mathbb{E}[W_1 W_1^{\top}] - \frac 1 N \bone \bone^\top \right\|_{\textnormal{op}} e_v^T e_v \left\| \mathbb{E}[W_1 W_1^{\top}] - \frac 1 N \bone \bone^\top \right\|_{\textnormal{op}}^{k-s-1}\\
        & \leq   \lambdatwo e_v^T e_v \left\| \mathbb{E}[W_1 W_1^{\top}] - \frac 1 N \bone \bone^\top \right\|_{\textnormal{op}}^{k-s-1}
    \end{align*}
    which by induction, suffices to prove \Cref{eq:induc}.
\end{proof}

\section{Proof of \Cref{th:lin_thm}}
\lintheorem*

\begin{proof}
The proof relies on the use of an omniscient agent knowing the gradients of all incurred losses up to time $t-1$.

    Let us define the quantities $\barz_t$ and $\bar{g}_t$
    \begin{align*}
        &\bar{g}_t = \frac{1}{N}\sum_{v \in V} g_t(x_t(v), v)
        = \sum_{v \in V} \frac{\mathds{1}(v \in S_t)}{N} \nabla \loss_{t}(x_t(v),v)
        \\
        &\barz_t = \frac{1}{N} \sum_{v \in V} z_t(v).
    \end{align*}

    Then the decision of the omniscient agent is defined as
    \begin{equation*}
        \bar{x}_t = \operatorname{argmin}_{x \in X}\left\{\langle \bar{z}_t, x\rangle+\frac{1}{\eta} \psi(x)\right\}.
    \end{equation*}
    Note that
    \begin{equation}
        \barz_{t+1} =  \barz_{t} + \bar{g}_t.
    \end{equation}
    The proof of the theorem relies on \Cref{lem:decomp}, where $y_t$ is set to $\bar x_t$.
    \begin{equation}
        \label{eq:regret_decomposition_1}
        \Regbar_T\leq \underbrace{ 3 \sum_{t=1}^{T} \sum_{u \in V} \frac{\Ind{u\in S_t}}{|S_t|}L \left \| x_t(u) - {\bar x}_t\right\|}_{(A)}
        +
        \underbrace{\sum_{t=1}^{T} \sum_{v \in V} \frac{\Ind{v\in S_t}}{|S_t|} \vp{\nabla \loss_{t}(x_t(v),v)}{{\bar x}_t - x^{*}}}_{(B)}\, .
    \end{equation}

    We start by analyzing the general case.
    Let us focus on Term $(B)$ first.
    \begin{align*}
        \E \left[\sum_{t=1}^{T} \sum_{v \in V} \frac{\Ind{v\in S_t}}{|S_t|} \vp{\nabla \loss_{t}(x_t(v),v)}{\bar x_t - x^{*}} \right]
        &= \sum_{t=1}^{T} \sum_{v \in V}  \E \lr{\frac{\Ind{v\in S_t}}{1 +\sum_{u \in \mathcal V \setminus v} \Ind{u \in S_t}}}\E\left[\vp{\nabla \loss_{t}(x_t(v),v)}{\bar x_t - x^{*}}\right]\\
        &= \sum_{t=1}^{T} \sum_{v \in V}  p_v c_v \E\left[\vp{\nabla \loss_{t}(x_t(v),v)}{\bar x_t - x^{*}}\right]\\
        &\leq \max_{v\in \mathcal V} c_v \sum_{t=1}^{T} \sum_{v \in V} p_v \E\left[\vp{\nabla \loss_{t}(x_t(v),v)}{\bar x_t - x^{*}}
        \right]
        \,,
    \end{align*}
    where $c_v = \E \lr{\frac{1}{1 +\sum_{u \in \mathcal V \setminus v} \Ind{u \in S_t}}}\leq 1$. Recall that $\tilde T = \big(1- \Pi_{v \in \mathcal V} \lr{1-p_v}\big)T$ denotes the expected number  of time steps when there is at least one active agent.

    In the $p$-uniform case for example, $c_v=\frac{\tilde{T}}{T p N}\leq 1$. This holds because, on the one hand,
    \[\sum_{v \in V} \E\left[\frac{\Ind{v\in S_t}}{|S_t|}\right] = \sum_{v \in V}  p c_v = N p c_v\]
    due to all $c_v$ being equal. On the other hand,
    \[\sum_{v \in V} \E \left[ \frac{\Ind{v\in S_t}}{|S_t|}\right] = \Pr \lr{S_t \neq \emptyset }= \frac {\tilde T} T \,.\]
    Since $\bar x_t$  are the predictions of FTRL on linear losses  $\vp{\bar g_t}{ \cdot}$, we know from standard FTRL analysis \citep[Corollary~7.9]{orabona2019modern},
    \begin{equation}  \label{eq:FTRLresult}\frac{1}{N} \sum_{t=1}^{T} \sum_{v \in V}\vp{\nabla \loss_{t}(x_t(v),v)}{\bar x_t - x^{*}} \Ind{v\in S_t} \leq \frac{\psi(x^*)}{\eta} + \frac{L^2}{\mu}  \sum_{t=1}^T\eta \frac{|S_t|^2}{N^2}\end{equation}
    which by taking expectation and using the independence of $S_t$ and $x_t(v)$ leads to
    \[  \E\left[\sum_{t=1}^{T} \sum_{v \in V} p_v\vp{\nabla \loss_{t}(x_t(v),v)}{\bar x_t - x^{*}} \right] \leq \frac{\psi(x^*) N}{\eta   } + \frac{L^2}{\mu}   \eta \sum_{t=1}^T\Big(N \bar p + \bar p(1-\bar p) - \sigma_p^2 \Big)\,, \]
    where $\bar p = \frac 1 N \sum_{v\in \mathcal{V}} p_v$ and $\sigma_p^2 = \frac 1 N \sum_{v\in \mathcal{V}} p_v^2 - \bar p^2$.

    This holds because $\E[|S_t|^2] = \E[|S_t|]^2 + \textnormal{Var}(|S_t|)$, and $|S_t|$ is the sum of independent Bernoulli of parameter $p_v$, so that $\textnormal{Var}(|S_t|) = \sum_{v\in \mathcal{V}} p_v(1-p_v)$, which can also be written as $N\bar p(1- \bar p) - N \sigma_p^2$.
    Hence
    \begin{align*}
        \E \left[\sum_{t=1}^{T} \sum_{v \in V} \frac{\Ind{v\in S_t}}{|S_t|} \vp{\nabla \loss_{t}(x_t(v),v)}{\bar x_t - x^{*}} \right]
        &\leq  \max{c_v}\lr{  \frac{\psi(x^*) N }{\eta    } + \frac{L^2}{\mu}   \sum_{t=1}^T\eta  (N \bar p + \bar p (1-\bar p) - \sigma_p^2 )} \\
        &\leq  \lr{  \frac{\psi(x^*) N }{\eta    } + \frac{L^2}{\mu}   \sum_{t=1}^T\eta  (N \bar p + \bar p (1-\bar p) - \sigma_p^2 )} \,.
    \end{align*}
    Regarding Term $(A)$, since
    \begin{equation*}
        \bar{x}_t = \operatorname{argmin}_{x \in X}\left\{\langle \bar{z}_t, x\rangle+\frac{1}{\eta} \psi(x)\right\}
    \end{equation*}
    and
    \begin{equation*}
        x_t(v) = \operatorname{argmin}_{x \in X}\left\{\langle z_t(v), x\rangle+\frac{1}{\eta} \psi(x)\right\}\, ,
    \end{equation*}
    we have
    \begin{equation}
        \label{eq:z}
        \left\|x_t(v)-\bar{x}_t\right\| \leq \eta/\mu  \left\|z_t(v)-\bar{z}_t\right\|_{*}\,,
    \end{equation}
thanks to the duality between strong convexity and smoothness \cite[Theorem 6.11]{orabona2019modern}.
    For any $t \in [T]$ and any $v \in [N]$, we have
    \begin{align*}
        \bz_{t+1}  &= W_t \bz_{t} + \bg_t
        = W_t W_{t-1} \bz_{t-1} + W_t \bg_{t-1} + \bg_t
        = \sum_{s=1}^{t-1} W_t \cdots W_{s+1} \bg_s + \bg_t\,.
    \end{align*}

    Simultaneously, we have
    \begin{align*}
        \bar{z}_{t+1} = \frac 1 N \sum_{s=1}^t \bone^\top \bg_s,
    \end{align*}
    so that
    \begin{align*}
        \bz_{t+1} - \bone \bar{z}_{t+1} &=  \sum_{s=1}^t W_t \cdots W_{s+1} \bg_s + \bg_t- \frac 1 N \bone \bone^\top \bg_s\\
        & = \sum_{s=1}^{t-1} \left[ W_t \cdots  W_{s+1} - \frac 1 N \bone \bone^\top \right] \bg_s +\bg_t - \frac 1 N \bone \bone^\top \bg_t\,.
    \end{align*}

    In turn,
    \begin{align*}
        z_{t+1}(v) -  \bar{z}_{t+1} &= (\bz_{t+1} - \bone \bar{z}_{t+1})^T e_v\\
        & =  \sum_{s=1}^{t-1} \bg_s^T \left[ W_t \cdots  W_{s+1} - \frac 1 N \bone \bone^\top \right] ^T e_v + \bg_t^\top (I - \frac 1 N \bone \bone^\top) e_v,\\
    \end{align*}
    so that we can compute :
    \begin{align*}
        \left \|z_{t+1}(v) - \barz_{t+1} \right \|_* &= \left \|\sum_{s=0}^{t-1} \left(  \sum_{u=1}^N \left ([W_t \cdots  W_{s+1}]_{u,v} -  \frac{1}{N} \right) g_{s}(x_{s}(u), u) \right) + g_{t}(x_t(v), v) - \bar{g}_{t}\right\|_*
        \\&
        \leq \left \|\sum_{s=0}^{t-1} \left(  \sum_{u=1}^N \left ([W_t \cdots  W_{s+1}]_{u,v} -  \frac{1}{N} \right) g_{s}(x_{s}(u), u)\right)\right\|_* + \left\|  g_{t}(x_t(v), v) - \bar{g}_{t}\right\|_*
        \\&
        \\&
        \leq \left \|\sum_{s=0}^{t-1} \left(  \sum_{u=1}^N \left ([W_t \cdots  W_{s+1}]_{u,v} -  \frac{1}{N} \right) g_{s}(x_{s}(u), u)\right)\right\|_* + \left\| g_{t}(x_t(v), v) - \bar{g}_{t}\right\|_*
        \\& \leq \sum_{s=0}^{t-1} \sum_{u=1}^N  \left|[W_t \cdots  W_{s+1}]_{u,v} -  \frac{1}{N} \right| \left\| g_{s}(x_{s}(u), u)\right\|_* + \left\| g_{t}(x_t(v), v) - \bar{g}_{t}\right\|_*
        \\& \leq \sum_{s=0}^{t-1} \sum_{u=1}^N  \max_{u \in V}\left|[W_t \cdots  W_{s+1}]_{u,v} -  \frac{1}{N} \right| \left\| g_{s}(x_{s}(u), u)\right\|_* + \left\| g_{t}(x_t(v), v) - \bar{g}_{t}\right\|_*
    \end{align*}
    
    which yields
    \begin{align}
    \|z_{t+1}(v) -  \bar{z}_{t+1}\|_*
        & \leq  \sum_{s=0}^{t-1}  |S_s| L \left\|  W_t \cdots  W_{s+1} e_v - \frac 1 N \bone  \right\|_{\infty} +  2 L \,
        \end{align}
        Because $\left\| g_{s}(x_{s}(u), u)\right\|_* \leq \Ind{u \in S_t} L$ and $\sum_{u \in V}^N \Ind{u \in S_t} = | S_t|$.
        By taking the expectation on each side, 
        \begin{align*}
        \E[ \|z_{t+1}(v) -  \bar{z}_{t+1}\|_*]& \leq  \sum_{s=0}^{t-1} \bar p N L \E\left[ \left\|  W_{t} \cdots  W_{s+1} e_v - \frac 1 N \bone  \right\|_{\infty}\right] + 2   L\\
        & \leq  \sum_{s=0}^{t-1} \bar p N L \E\left[ \left\|  W_{t} \cdots  W_{s+1} e_v - \frac 1 N \bone  \right\|_{2} \right]+ 2   L \,.
    \end{align*}

    At the same time, we also have
    \begin{align*}
         \E[ \|z_{t+1}(v) -  \bar{z}_{t+1}\|_*]
        \leq  \sum_{s=1}^{t-1} \sqrt N L \E\left[\left\|  W_{t} \cdots  W_{s+1} e_v - \frac 1 N \bone  \right\|_{2}\right] + 2 L \,,
    \end{align*}
    because \begin{align} \left \|z_{t+1}(v) - \barz_{t+1} \right \|_* &\le \left \|\sum_{s=0}^{t-1}  \sum_{u=1}^N \left ([W_t \cdots  W_{s+1}]_{u,v} -  \frac{1}{N} \right) g_{s}(x_{s}(u), u)\right\|_* + \left\| g_{t}(x_t(v), v) - \bar{g}_{t}\right\|_* \nonumber \\
      &\leq \sum_{s=0}^{t-1} \sum_{u=1}^N \left |[W_t \cdots  W_{s+1}]_{u,v} -  \frac{1}{N} \right| \max_{u \in V} \left \|g_{s}(x_{s}(u), u)\right\|_* + 2L\nonumber\\
      &=  \sum_{s=0}^{t-1} \left \|W_t \cdots  W_{s+1} e_v -  \frac{1}{N} \bone \right\|_1 \max_{u \in V} \left \|g_{s}(x_{s}(u), u)\right\|_* + 2L \nonumber \\
       & \leq \sum_{s=0}^{t-1} \sqrt N \left \|W_t \cdots  W_{s+1}  e_v -  \frac{1}{N} \bone \right\|_2 \max_{u \in V} \left \|g_{s}(x_{s}(u), u)\right\|_* + 2L \label{eq:z_usual}\end{align}
    Hence, combing the above two inequalities on $\|z_{t+1}(v) -  \bar{z}_{t+1}\|_*$ with \cref{eq:z},
    \begin{equation}
        \label{eq:ptA}
        \E\left[\sum_{t=1}^T \sum_{u \in \mathcal{V}} \frac{1}{|\mathcal{S}_t|} \mathds{1}(u \in \mathcal{S}_t) L \|x_t(u) - \bar{x}_t\|\right]
        \leq \eta    \frac{L^2}{\mu}  \left( 2+\min( \bar p N, \sqrt N)  \frac{\rho}{1 - \rho} \right)\tilde T  \,.
    \end{equation}
    In turn, 
    \[\expRegbar \leq    \frac{D^2 N }{\eta    }+ \frac{L^2}{\mu}  \eta \sum_{t=1}^T  \lr{ 6 + (N \bar p + \bar p (1-\bar p) +  \sigma_p^2 )T/{\tilde T} + 3 \min( \bar p N, \sqrt N) \frac{\rho}{1 - \rho}} \tilde T\,. \]
    Consequently
    \[\expRegbar \leq    \frac{D^2 N }{\eta    }+ \frac{L^2}{\mu}  \eta \sum_{t=1}^T  \lr{ 6 + \frac{\bar p N + \bar p (1-\bar p) - \sigma_p^2 } {p_{\min}} + 3 \min( \bar p N, \sqrt N)  \frac{\rho}{1 - \rho}} \tilde T\,. \]

    We also have 
        \[\expRegbar \leq    \frac{D^2N }{\eta    }+ \frac{L^2}{\mu}  \eta \sum_{t=1}^T  \lr{ 6 + (N \bar p + \bar p (1-\bar p) +  \sigma_p^2 ) + 3 \min( \bar p N, \sqrt N) \frac{\rho}{1 - \rho}}T\,, \]
    which is less tight in general but sufficient with the assumption that $\sum_{v \in V} p_v \ge 1$.
    This directly yields \Cref{{eq:first-bound}}

    We now turn to the $p$-uniform case.
    The refinement in this case is due to the fact that we can compute $c_v$.
    This makes it possible to refine our bound of Term $(B)$.
    Specifically, we are interested in the expectation of Term $(B)$,
    \begin{align*}
        \E \left[\sum_{t=1}^{T} \sum_{v \in V} \frac{\Ind{v\in S_t}}{|S_t|} \vp{\nabla \loss_{t}(x_t(v),v)}{\bar x_t - x^{*}} \right]
        &= \sum_{t=1}^{T} \sum_{v \in V} \E \left[  \frac{\Ind{v\in S_t}}{|S_t|} \right]\E\left[\vp{\nabla \loss_{t}(x_t(v),v)}{\bar x_t - x^{*}}\right]\\
        &= \sum_{t=1}^{T} \sum_{v \in V} \frac{1- (1-p)^N}{N}\E\left[\vp{\nabla \loss_{t}(x_t(v),v)}{\bar x_t - x^{*}}
        \right]\\
        &=  \frac{1-(1-p)^N}{N} \E\left[\sum_{t=1}^{T} \sum_{v \in V}\vp{\nabla \loss_{t}(x_t(v),v)}{\bar x_t - x^{*}}
        \right]\\
        &\leq  \frac{\tilde T}{T N} \E\left[\sum_{t=1}^{T} \sum_{v \in V}\vp{\nabla \loss_{t}(x_t(v),v)}{\bar x_t - x^{*}}
        \right]\,,
    \end{align*}
    where the second inequality comes from the independence of $S_t$ and $\bar{x}_t$ (whose update only depends on the history) and $x_t(v)$.

    Since $\bar x_t$  are the predictions of FTRL on linear losses  $\vp{\bar g_t}{ \cdot}$, we know
    \[  \frac{1}{N} \sum_{t=1}^{T} \sum_{v \in V}\vp{\nabla \loss_{t}(x_t(v),v)}{\bar x_t - x^{*}} \Ind{v\in S_t} \leq \frac{\psi(x^*)}{\eta} + \frac{L^2}{\mu}  \sum_{t=1}^T\eta |S_t|^2/N^2\]
    which by similar arguments yields
    \[ p \E\left[\sum_{t=1}^{T} \sum_{v \in V}\vp{\nabla \loss_{t}(x_t(v),v)}{\bar x_t - x^{*}} \right] \leq \frac{\psi(x^*) N}{\eta  } + \frac{L^2}{\mu}  \sum_{t=1}^T\eta N p + p(1-p) \,.\]

    Hence
    \begin{align}
        \label{eq:partB_unifp}
        \E \left[\sum_{t=1}^{T} \sum_{v \in V} \frac{\Ind{v\in S_t}}{|S_t|} \vp{\nabla \loss_{t}(x_t(v),v)}{\bar x_t - x^{*}} \right]
        &\leq  \frac{\tilde T}{T}\lr{  \frac{\psi(x^*) }{\eta    p} + \frac{L^2}{\mu}   \sum_{t=1}^T\eta  (1 + (1-p)/N)} \,,
    \end{align}
    which concludes our bound of the expectation of Term $(B)$.

    The analysis of Term $(A)$ is unchanged for the case of uniform $p_v$, and yields:
    \begin{equation}
        \label{eq:ptA-2}
        \E\left[\sum_{t=1}^T \sum_{u \in \mathcal{V}} \frac{1}{|\mathcal{S}_t|} \mathds{1}(u \in \mathcal{S}_t) L \|x_t(u) - \bar{x}_t\|\right]
        \leq \eta    \frac{L^2}{\mu} \min(Np,\sqrt N)  \frac{\rho}{1 - \rho} \tilde T\,.
    \end{equation}

    Combining with \Cref{eq:partB_unifp}, we have
    \begin{align*}
        \expRegbar
        &\leq  \frac{\tilde T}{T}\lr{  \frac{\psi(x^*) }{\eta    p}} + \frac{L^2}{\mu}     \eta \lr{ 6 + 1+(1-p)/N + 3 \min(Np,\sqrt N)  \frac{\rho}{1 - \rho}} \tilde T \\
        &
        \leq  \lr{1-(1-p)^N}\lr{  \frac{\psi(x^*) }{\eta    p}} + \frac{L^2}{\mu}     \eta \lr{8+ 3 \min(Np,\sqrt N) \frac{\rho}{1 - \rho}} \tilde T
        \\
        &
        \leq    \frac{D^2\min(1, Np) }{\eta    p}+ \frac{L^2}{\mu}     \eta \lr{8 + 3 \min(Np,\sqrt N)  \frac{\rho}{1 - \rho}} \tilde T\,.
    \end{align*}
    With the assumption that $\sum_{v \in V} p_v \ge 1$, this easily yields \Cref{eq:second-bound}:
    \begin{align*}
        \expRegbar
        \leq    \frac{D^2}{\eta    p}+ \frac{L^2}{\mu}     \eta \lr{8 + 3 \min(Np,\sqrt N)  \frac{\rho}{1 - \rho}}  T\,.
    \end{align*}

    Finally, \Cref{eq:third-bound} follows from simple computations.
\end{proof}

\section{Proof of \Cref{th:general_thm_hp}}

\subsection{Preliminary result}

The main difference between the proof of the bound in expectation and that of the high probability bound is the use of the following Lemma to bound the deviation between $ W_{t} \cdots  W_{s+1}$ and $ \frac 1 N \bone \bone^{\top}$.
\begin{lemma}
    \label{cons}
     Assuming that for $k=1\ldots T$, $W_k$ are doubly stochastic matrices and i.i.d., we have, $\forall v \in V$, $\forall s,t \in [T]$ such that $ t>s$,
    \begin{equation*}
        \Pr\left(\left \|  W_{t} \cdots  W_{s+1} e_v - \frac 1 N \bone  \right\|_{2} \geq \epsilon \right) \leq \frac{\lambda_2(\E[W^2])^{t-s}}{\epsilon^2}.
    \end{equation*}
    When $t-s  \geq \frac{3\log{\epsilon^{-1}}}{\log{\lambda_2[W^2]^{-1}}} = t^{*}$, we have
    \begin{equation*}
        \Pr\left(\left \|  W_{t} \cdots  W_{s+1} e_v - \frac 1 N \bone  \right\|_{2} \geq \epsilon \right) \leq \epsilon
    \end{equation*}
\end{lemma}
This lemma is from \citet{boyd2006randomized}. We provide a proof for completeness.
\begin{proof}

    By applying Markov’s inequality
%
%
    we have
    \begin{align*}
        \Pr \left(\left \|  W_{t} \cdots  W_{s+1} e_v - \frac 1 N \bone  \right\|_{2}  \geq \epsilon \right)  \leq \frac{\E\left(\left \|  W_{t} \cdots  W_{s+1} e_v - \frac 1 N \bone  \right\|_{2}^2\right)}{\epsilon^2} \, .
    \end{align*}
   Denoting by $\tilde W_{k} =W_{k}- \frac 1 N \bone \bone^\top$,
   we need to prove that $\E\left(\left \|  W_{t} \cdots  W_{s+1} e_v - \frac 1 N \bone  \right\|_{2}^2\right) $ is bounded by $\lambda_2(\E[W^2])^{t-s}$.
  This is done by using \Cref{lem:lambW}.
\end{proof}

\subsection{Proof of \Cref{th:general_thm_hp}}
\gentheoremhp*
\begin{proof}
We have  \begin{align*}
        \Regbar_T  \leq \underbrace{ 3 \sum_{t=1}^{T} \sum_{u \in V} \frac{\Ind{u\in S_t}}{|S_t|}L \left \| x_t(u) - \bar x_t\right\|}_{(A)}
        +     \underbrace{\sum_{t=1}^{T} \sum_{v \in V} \frac{\Ind{v\in S_t}}{|S_t|} \vp{\nabla \loss_{t}(x_t(v),v)}{\bar x_t - x^{*}}}_{(B)}\, ,
    \end{align*}
    thanks to \Cref{lem:decomp}.
    Let us start by bounding Term $(B)$.
    \begin{align*}
    \sum_{t=1}^{T} \sum_{v \in V} \frac{\Ind{v\in S_t}}{|S_t|} \vp{\nabla \loss_{t}(x_t(v),v)}{{\bar x}_t - x^{*}} &\leq N \left( \frac{1}{N}\sum_{t=1}^{T} \sum_{v \in V} \Ind{v\in S_t} \vp{\nabla \loss_{t}(x_t(v),v)}{{\bar x}_t - x^{*}}\Pr\lr{S_t\neq \emptyset}\right). \nonumber
    \\& \leq N \left( \frac{\psi(x^*)}{\eta} + \frac{L^2}{\mu} \sum_{t=1}^T\eta \right)
\end{align*}
We then proceed by bounding Term $(A)$, by observing    
   \begin{align*} 3 \sum_{t=1}^{T} \sum_{u \in V} \frac{\Ind{u\in S_t}}{|S_t|}L \left \| x_t(u) - \bar x_t\right\|& \leq 3\eta \sum_{t=1}^{T} \sum_{u \in V} \max_{u \in V} L \left \| z_t(u) - {\bar z}_t\right\|  
    \end{align*}
Now, we focus on $\|z_{t+1}(v) -  \bar{z}_{t+1}\|_* $ . Starting from \Cref{eq:z_usual} and applying Lemma~\ref{cons}, we obtain
\begin{align}
    \label{eq: highgossip_1}
    \|z_{t+1}(v) -  \bar{z}_{t+1}\|_* &\leq  \sum_{s=1}^{t-1} \sqrt N L  \left\|  W_{t} \cdots  W_{s+1} e_v - \frac 1 N \bone  \right\|_{2} + 2L  \nonumber
    \\& \leq \sqrt NL(t-t^*) \epsilon + \sqrt NLt^* + 2NL \nonumber
    \\& \leq \sqrt NLT \epsilon + \sqrt NLt^* + 2NL
\end{align}
with probability at least $1 - \epsilon T$. Hence,
    \begin{align*}
    \Regbar_T 
     \leq 3\eta \sqrt N\frac{L^2}{\mu} T( t^* + \epsilon T + 2)  + N\left(\frac{D^2}{\eta} + \frac{L^2}{\mu} \sum_{t=1}^T\eta \right)
    &\leq N \frac{D^2}{\eta} +  \eta N\frac{L^2}{\mu} T( 3t^* + \epsilon T + 3) 
\end{align*}
with probability at least $1- \epsilon N T^2$. Setting $\epsilon = \frac{\delta}{NT^2} $ and $t^* = \frac{3\log{(\frac{NT^2}{\delta})}}{1- \rho^2}$, we have

\begin{equation}
    \Regbar_T \leq  N  \frac{D^2}{\eta} + 3\eta T N \frac{L^2}{\mu}\left (\frac{3\log{(\frac{NT^2}{\delta})}}{1- \rho^2}+ 3 + \frac{\delta}{NT}\right)\leq  N  \frac{D^2}{\eta} + 3\eta T N \frac{L^2}{\mu}\left (\frac{3\log{(\frac{NT^2}{\delta})}}{1- \rho^2}+ 4\right)
\end{equation}
with probability at least $1 - \delta$.
\end{proof}
\section{Proof of \Cref{th:lb}}
\thlower*
\begin{proof}
The proof is an adaptation of \citep[Theorem~3]{wang2022unified}.
Let $G$ be the cycle with $N = 4M$ nodes.
Suppose that for $3M$ nodes, the local loss functions are set to $0$ for all $t \in [T]$,
\[
    \ell_t(1,\cdot)= \cdots = \ell_t(3M,\cdot)= 0~.
\]
The local loss functions of the remaining nodes are reset every $M$ steps, in such a way that for each $k=0,\ldots,\lceil T/M \rceil -1$,
\[
    \ell_t(3M+1,\cdot)
= \cdots =
    \ell_t(4M,\cdot) = (N-M+1) H_{k}(\cdot) \qquad t \in \big[Mk+1, M(k+1)\big]
\]
where $H_0,\ldots,H_{\lceil T/M \rceil - 1}$ are random functions such that $H_k(x) = \varepsilon_k L \langle w,x\rangle$
and $\varepsilon_k$ are independent Rademacher random variables (equal to $1$ with probability $1/2$ and to $-1$ with probability $1/2$). The value $w$ is chosen as follows: pick $x_1,x_2\in\sX$ such that $\|x_1-x_2\|_2 = D$ and let $w = (x_1-x_2)/\|x_1-x_2\|_2$.
The network loss at time $t$ is thus 
\[
    \ellbar_t{(V,x)} = \frac{(N-M+1)M}{N}H_{ \lceil t/M  \rceil}(x)~.
\]
Observe that there is a delay introduced by communication over a cycle, such that agents $M+1 \ldots 2M$ have no access to information about the loss of any agent in $\{3M+1,\ldots , 4M\}$ before at least $M$ time steps have elapsed. 
Specifically, for any $v \in \{M+1,\ldots,2M\}$, predictions $x_{kt+1}(v),\ldots,x_{kt+M}(v)$ are computed without $v$ knowing $H_k$. Hence, using the standard lower bound for online learning \citep[Theorem~5.1]{orabona2019modern}, we have that in expectation with respect to $\varepsilon_0,\ldots,\ve_{\lceil T/M \rceil - 1}$, for all $v \in \{M+1,\ldots,2M\}$,
\begin{align*}
    \E&\left[\sum_{t=1}^T \ellbar_t\big(V,x_t(v)\big) - \min_{x \in \sX}\sum_{t=1}^T \ellbar_t(V,x)\right]
\\&=
    \E\left[\sum_{k=0}^{{\lceil T/M \rceil - 1}}\sum_{t=kM+1}^{(k+1)M}\frac{(N-M+1)}{N}H_k(x_t(v))
-
    \min\limits_{x\in \sX}\sum_{k=0}^{\lceil T/M \rceil - 1}\sum_{t=kM+1}^{(k+1)M}\frac{(N-M+1)}{N}H_k(x)\right]
\\&=
    \frac{(N-M+1)}{N}\E\left[\sum_{k=0}^{{\lceil T/M \rceil - 1}}\sum_{t=kM+1}^{(k+1)M} H_k(x_t(v))
-
    \min\limits_{x\in \sX}M\sum_{k=0}^{\lceil T/M \rceil - 1} H_k(x)\right]
\\&=
    \frac{(N-M+1)M}{N}\E\left[-\min\limits_{x\in\sX}\sum_{k=0}^{\lceil T/M \rceil - 1} H_k(x) \right]
\\&=
    \frac{(N-M+1)ML}{N}\E\left[\max\limits_{x\in\sX}\sum_{k=0}^{\lceil T/M \rceil - 1} \varepsilon_k \langle w,x\rangle \right]
\\&\ge
    \frac{(N-M+1)ML}{N}\E\left[\max\limits_{x\in\{x_1,x_2\}}\sum_{k=0}^{\lceil T/M \rceil - 1} \varepsilon_k \langle w,x\rangle \right]
\\&=
    \frac{(N-M+1)ML}{2N}\E\left[\left|\sum_{k=0}^{\lceil T/M \rceil - 1} \varepsilon_k \langle w,x_1-x_2\rangle\right| \right]
    \tag{using $\max\{a, b\} = \frac{1}{2}(a + b) + |a - b|$}
\\&=
    \frac{(N-M+1)MLD}{2N}\E\left[\left|\sum_{k=0}^{\lceil T/M \rceil - 1} \varepsilon_k \right| \right]
    \tag{using $w = (x_1-x_2)/\|x_1-x_2\|_2$}
\\&\ge
    \frac{(N-M+1)MLD}{2N}\sqrt{\frac{T}{M}} \tag{Khintchine inequality}
\\&=
    \frac{(N-M+1)LD}{2N}\sqrt{MT}~.
\end{align*}
Now, let $U = \{M+1, \ldots, 2M\} \subset V$. We have
\begin{align*}
    \E[\Regbar_T]
&=
    \frac{1}{N} \sum_{v\in V}\lr{ \sum_{t=1}^T\ellbar_t\big(V,x_t(v)\big)
-
    \min_{x \in \sX}\sum_{t=1}^T \ellbar_t(V,x)}
\\&\ge
    \frac{1}{N} \sum_{v\in U}\lr{ \sum_{t=1}^T\ellbar_t\big(V,x_t(v)\big)
-
    \min_{x \in \sX}\sum_{t=1}^T \ellbar_t(V,x)}
\\&\ge
    \frac{(N-M+1)DL}{2N}\sqrt{MT}
\end{align*}
Hence there exists a choice of $\varepsilon_1,\ldots,\varepsilon_{\lceil T/M \rceil - 1}$ such that 
\begin{align*}
     \Regbar_T\geq&\frac{(N-M+1)DL}{2N}\sqrt{ MT}~.
\end{align*}
Note that, on the $N$-cycle, the highest and smallest non-zero eigenvalues of the Laplacian are, respectively, $\lambda_1(G)=1$ and $\lambda_{N-1}(G)=2-2\cos(2 \pi /N)$ \cite[Chapter 6.5]{spielman2019spectral}. Using the inequality $1 - \cos(x) \ge x^2/5, \, \forall x \in [0,\pi]$ (recall that $N \ge 4$ implying $2\pi/N \le \pi$), we have $\lambda_{N-1}(G)\ge \frac {8 \pi^2}{5 N^2}$ and so $\sym(G) \le \frac{5N^2}{8 \pi^2}\le \frac{N^2}{8}$.  
Then, using \Cref{th:lam}, we have that $\rho/(1-\rho) = \sym(G) - 1 \le \frac{N^2}{8} - 1 \le \frac{N^2}{8}$. This in turn implies that, for all $0 \le \alpha \le 1$,
    \begin{align*}
        \Regbar_T
    \ge
        \frac{(N-M+1)DL}{2N}\sqrt{MT}
    \ge
        \frac {3DL}{16} \sqrt N \sqrt T 
    \ge
        \frac {3DL}{16} \left(\frac{8\rho}{1-\rho}\right)^{\frac{\alpha}{4}}N^{\frac{1-\alpha}{2}} \sqrt{T}
    \end{align*}
    concluding the proof.
\end{proof}

\section{Spectral properties of the gossip matrix in the $p$-uniform case}
\subsection{Arbitrary graph.}
\thlam*

\begin{proof}
    Denoting by $L_1 = \textnormal{Lap}(\scG_1)$, one has:
    \begin{align*}
        \E[W_1^2] = \E[I - 2b L_1 + b^2 L_1^2]
        = I - 2b \E[L_1] + b^2 \E(L_1^2)
    \end{align*}
    \[
        \E[L_1] = p^2 D_{\scG} - p^2 A_{\scG} = p^2 L_{\scG}
    \]
    We compute:
    \[
        \E[L_1^2] = \E[D_{1}^2 - 2D_{1} A_{1} + A_{1}^2]
    \]
    where $D_1$ and $A_1$ denote the diagonal matrix of degrees and the adjacency matrix of $G_1$.
    \begin{align*}
        \E[(D_1^2)_{ii}]
        &= \E\left[\left(\sum_{j \in \mathcal{N}_i} \1(j \in S_1)\right)^2 \1(i \in S_1)\right]\\
        &= p \E \left[\sum_{j \in \mathcal{N}_i} \1(j \in S_1)\sum_{k \in \mathcal{N}_i} \1(k \in S_1) \right]\\
        &= p \E \left[\sum_{j \in \mathcal{N}_i}  \1(j \in S_1) \lr{\sum_{k \in \mathcal{N}_i, k\neq j } \1(k \in S_1) + \1(j \in S_1)} \right]\\
        &= p \left(p^2 D_{\scG} (D_{\scG} - I) + D_{\scG} p \right)_i\,,
    \end{align*}
    where $D_{\scG}$ denotes the matrix of degrees of the graph ${\scG}$.

    Finally,
    \[
        \E[(D_1^2)] = p^2 (p D_{\scG} (D_{\scG} - I) + D_{\scG})\,.
    \]
    where $A_{\scG}$ denotes the adjacency matrix of the graph ${\scG}$.
    Regarding $A_1^2$
    \begin{align*}
        \E[(A_1^2)_{ij}]
        &= \E\left[\sum_{k \in \mathcal{N}_i , k \in \mathcal{N}_j} \1(k \in S_1) \1(i \in S_1) \1(j \in S_1)\right]\\
        &=
        \begin{cases}
            p^3 \Big|\mathcal{N}_i \cap \mathcal{N}_j\Big| = p^3 (A_{\scG}^2)_{ij} & \text{ if } i \neq j\\
            \E\left[\sum_{k \in \mathcal{N}_i} \1(k \in S_1) \1(i \in S_1) \right]
            = p^2 (D_{\scG})_i & \text{ if } i = j
        \end{cases}
    \end{align*}
    Hence, finally,
    \[
        \E[A_1^2] = p^3 A_{\scG}^2 - p^3 D_{\scG} + p^2 D_{\scG}\,.
    \]
    Now, regarding $D_1 A_1$,
    \begin{align*}
        \E[(D_1 A_1)_{ij}] &= \E\left[ (D_1)_{ii} (A_1)_{ij}\right]\\
        & = \E\left[\1(i \in S_1) \1(j \in S_1) \1((i,j)\in E)\sum_{k \in \mathcal{N}_i  } \1(k \in S_1)\right]\\
        &=
        \begin{cases}
            p^3 (D_{\scG}A_{\scG})_{ij} - p^3 [A_{\scG}]_{ij}  + p^2 (A_{\scG})_{ij} & \text{ if }i \neq j\\
            0 & \text{ if } i = j
        \end{cases}
    \end{align*}
    Finally,
    \[
        \E[D_1A_1] = p^3 (D_{\scG} - I)A_{\scG} + p^2 A_{\scG}\,.
    \]
    Putting everything together,
    \begin{align*}
        \E[L_1^2] &= p^2 (p D_{\scG} (D_{\scG} - I) + D_{\scG}) + p^3 A_{\scG}^2 - p^3 D_{\scG} + p^2 D_{\scG}  - 2p^3 (D_{\scG}-I)A_{\scG} - 2 p^2 A_{\scG}\\
        &= p^3 D_{\scG}^2 - 2p^3 D_{\scG} + 2p^2 D_{\scG} - 2p^3 D_{\scG}A_{\scG} + 2 p^3 A_{\scG} - 2p^2 A_{\scG} + p^3 A_{\scG}^2\,.
    \end{align*}

    So in the end we have
    \begin{align*}
        \E[W_1^2] &= I
        - 2b p^2 (D_{\scG} - A_{\scG})
        + b^2 \left(p^3 (D_{\scG}^2 - 2D_{\scG} -2 D_{\scG}A_{\scG} +2 A_{\scG})+ p^3 A_{\scG}^2 + p^2 (2D_{\scG} - 2A_{\scG}) \right)\\
        &= I - 2bp^2(D_{\scG} - A_{\scG}) + b^2 [p^3 ((D_{\scG}-A_{\scG})^2 + 2 (A_{\scG}-D_{\scG})) + 2 p^2 (D_{\scG}-A_{\scG})\,.]
    \end{align*}
    \[ \boxed{ \E[W_1^2] = I - 2bp^2L_{\scG} + b^2 [p^3 (L_{\scG}^2 -2 L_{\scG}) + 2 p^2 L_{\scG}]\,.}\]

    So if $x$ is an eigenvector of $L_{\scG}$ with eigenvalue $\lambda_i(L_{\scG})$, then
    \begin{align*}
        \E[W_1^2]x&= x - 2bp^2 \lambda_i(L_{\scG}) x + b^2 [p^3 (\lambda_i(L_{\scG})^2 -2\lambda_i(L_{\scG}) )  + 2 p^2 \lambda_i(L_{\scG}) ] x\\
        &= \left(1 - 2bp^2 \lambda_i(L_{\scG}) + b^2 [p^3 (\lambda_i(L_{\scG})^2 -2\lambda_i(L_{\scG}) )  + 2 p^2 \lambda_i(L_{\scG}) ]\right) x\\
        &= \left(1 + ( - 2bp^2 + 2 b^2 p^2 - 2p^3 b^2) \lambda_i(L_{\scG}) + b^2 p^3 (\lambda_i(L_{\scG})^2  \right) x
    \end{align*}
    So that the eigenvalues of $\E[W_1^2]$ can easily be expressed as eigenvalues of $L_{\scG}$.
    It is easy to check that $f_{b,p} : x \mapsto 1 + ( - 2bp^2 + 2 b^2 p^2 - 2p^3 b^2) x + b^2 p^3 x^2 $ is a quadratic function decreasing on $]-\infty, 1 + (1-b)/ (bp) ]$.

    It is clear that
    \[
        1 + (1-b)/ (bp) = 1 + (1/b-1) /p \geq 1/bp \geq 1/b \geq 2\Delta({\scG})\geq \lambda_1(L_{\scG})\,.
    \]

    So that $f_{b,p}$ is decreasing on an interval containing all eigenvalues of the Laplacian $L_{\scG}$ and that
    $$
    \lambda_2(\E[W_1^2]) = f_{b,p}(\lambda_{N-1}(L_{\scG}))\,.
    $$
    Hence
    \[
        \boxed{\lambda_2(\E[W_1^2]) = 1 + ( - 2bp^2 + 2 b^2 p^2 - 2p^3 b^2) \lambda_{N-1}(L_{\scG}) + b^2 p^3 (\lambda_{N-1}(L_{\scG}))^2\,.}
    \]
%

Setting $b =1/\lambda_1(G)$ and rewriting yields
 \begin{align*}
        \rho^2 = \lambda_2(\E[W_1^2])= 1 - \frac{2p^2}{\sym(G)}\left(1 - \frac{1-p}{\lambda_1(G)} - \frac{p}{2\sym(G)}\right)~.
    \end{align*}
\end{proof}

\subsection{Special cases}

\paragraph{Clique.}

When ${\scG}$ is the clique, and $b = 1/(N)$,    since $\lambdanminone = N$
\[\lambda_2(\E[W_1^2]) = 1 - p^2 - \frac 1 {N} ((N-2) p^3 + 2 p^2)\,,\]
using  \Cref{th:lam}.

\paragraph{Strongly Regular graphs.}

Strongly Regular graphs are such that
\begin{itemize}
    \item they are $k$-regular, for some integer $k$
    \item there exists an integer $m$ such that for every pair of vertices $u$ and $v$ that are neighbors in ${\scG}$, there are $m$ vertices that are neighbors of both $u$ and $v$
    \item there exists an integer $n$ such that for every pair of vertices $u$ and $v$ that are not neighbors in ${\scG}$, there are $n$ vertices that are neighbors of both $u$ and $v$.
\end{itemize}
Such graphs' adjacency matrices have eigenvalues $k$ with multiplicity $1$
and $r$ and $s$ defined as follows:

$r = \frac{m-n + \sqrt{(m-n)^2+ 4 (k-n)}}{2}$ and \\ $s = \frac{m-n - \sqrt{(m-n)^2+ 4 (k-n)}}{2}\,.$

Hence, their Laplacian have eigenvalues $0$ with multiplicity $1$ and and $k-r$ and $k-s$.
This yields
$\lambdaone = k-s$ and $\lambdanminone = k-r$.

Using \Cref{th:labgen},  we have that if $b = 1/ \lambda_1(\lap({\scG}))$, then
\begin{equation*}
    \rho^2 = \lambda_2(\E[W_1^2]) \leq  1 -p^2  \frac{k-r}{k-s}
\end{equation*}
In particular, when ${\scG}$ is the lattice graph, with $N= M^2$ vertices
\begin{equation*}
    \rho^2 = \lambda_2(\E[W_1^2]) \leq  1 -\frac 1 2 p^2  \,.
\end{equation*}

replacing $k$ by $2M-2$, $m$ by $M-2$ and $n$ by $2$.

\paragraph{Grid.} Consider ${\scG}$  a grid of dimension $2$, with $N= M^2$.
${\scG}$ is the product of two paths graphs of length $M$.
Then if $\mu_1\ldots \mu_M$ are the eigenvalues of the path graph of length $M$, then all eigenvalues of $\lap(G)$ can be rewritten as  $\mu_i + \mu_j$ for some $i$ and $j$---see \citep[Theorem 3]{barik2015laplacian}.
Furthermore we know that $\mu_i = 2 (1 - \cos(\pi(M-i)/M))$---see, e.g., \citep[Theorem 6.6]{spielman2019spectral}---so that $\lambdaone = 2 \mu_1 = 4-4\cos(\pi (M-1)/M$  and $\lambdanminone = \mu_N + \mu_{N-1} = 2-2\cos(\pi/M) $.

Hence setting $b = 1/ \lambda_1(\lap({\scG}))$
\begin{align*}
    \rho^2 = \lambda_2(\E[W_1^2]) \leq &
    1 -p^2 \frac{2-2\cos(\pi/M)}{4-4\cos(\pi (M-1)/M)}
\end{align*}
by using \Cref{th:labgen}.

\section{Proof of \Cref{erdosrenyi}}
\erdosrenyi*
\begin{proof}
As bound~\eqref{eq:second-bound} in \Cref{th:lin_thm} applies, we just have to compute the spectral gap.
    
    Regarding the expression of $\rho^2 = \lambda_2(\E[W_1^2])$, we take the same steps as for the proof of \Cref{th:lam}.
    
    We observe that
        \begin{align*}
            \E[W_1^2] = \E[I - 2b L_1 + b^2 L_1^2]
            = I - 2b \E[L_1] + b^2 \E(L_1^2)\,.
        \end{align*}
    We compute
        \[
            \E[L_1] = p^2 D_{\scG} - p^2 A_{\scG} = p^2 L_{\scG}
        \]
    and
        \[
            \E[L_1^2] = \E[D_{1}^2 - 2D_{1} A_{1} + A_{1}^2]\,.
        \]

    Mutatis mutandis in the computations of the proof of \Cref{th:lam}, we get the following equalities for the first term,
        \begin{align*}
            \E[(D_1^2)_{ii}]
            &= \E\left[\left(\sum_{j \in \mathcal{N}_i} \1((i,j) \in E_1)\right)^2 \right]\\
            &= \E \left[\sum_{j \in \mathcal{N}_i}  \1((i,j) \in E_1) \lr{\sum_{k \in \mathcal{N}_i, k\neq j } \1((i,k) \in E_1) + \1((i,j) \in E_1)} \right]\\
            &= \E \left[
            \E\left[
            \sum_{j \in \mathcal{N}_i}  \1((i,j) \in E_1) \lr{\sum_{k \in \mathcal{N}_i, k\neq j } \E \left[
            \1((i,k) \in E_1)
            +\1((i,j) \in E_1) \Big| (i,k) \in S_1
            \right]} 
            \Big| (i,j) \in S_1
            \right]
            \right]\\
            &= p^2 q \left(p q D_{\scG} (D_{\scG} - I) + D_{\scG}  \right)_i\,,
        \end{align*}
    the second term,
        \begin{align*}
            \E[(A_1^2)_{ij}]
            &= \E\left[\sum_{k \in \mathcal{N}_i , k \in \mathcal{N}_j} \1((k,i) \in E_1)  \1((k,j) \in E_1)\right]\\
            &=
            \begin{cases}
                p^3 \left|\mathcal{N}_i \cap \mathcal{N}_j\right| = p^3 q^2(A_{\scG}^2)_{ij} & \text{ if } i \neq j\\
                 p^2 \sum_{k \in \mathcal{N}_i}\E\left[\1((i,k)\in E_1)\Big | \1((i,k) \in S_1) \right] 
                = p^2 q (D_{\scG})_i & \text{ if } i = j
            \end{cases}
        \end{align*}
    and the third term.
 \begin{align*}
        \E[(D_1 A_1)_{ij}] &= \E\left[ (D_1)_{ii} (A_1)_{ij}\right]\\
        & = \E\left[ \1((i,j)\in E_t)\sum_{k \in \mathcal{N}_i  } \1((i,k)\in E_1)\right]\\
        &=
        \begin{cases}
            p^3 q^2(D_{\scG}A_{\scG})_{ij} - p^3q^2 [A_{\scG}]_{ij}  + p^2 q (A_{\scG})_{ij} & \text{ if }i \neq j\\
            0 & \text{ if } i = j
        \end{cases}
    \end{align*}
    Finally, by adding these three inequalities and rearranging,
        \[ \boxed{ \E[W_1^2] = I - 2bp^2 qL_{\scG} + b^2 \left(p^3 q^2 (L_{\scG}^2 -2 L_{\scG}) + 2 p^2 q L_{\scG}\right)\,.}\]
     It is easy to check that $f_{b,p,q} : x \mapsto 1 + ( - 2bp^2 q + 2 b^2 p^2 q - 2p^3 b^2 q^2) x + b^2 p^3 q^2 x^2$ is a quadratic function decreasing on $(-\infty, 1 + (1-b)/ (bpq) ]$.

    Again,
    \[
        1 + (1-b)/ (bpq) = 1 + (1/b-1) /pq \geq 1/bpq \geq 1/b \geq 2\Delta({\scG})\geq \lambda_1(L_{\scG})\,.
    \]
    so that $f_{b,p,q}$ is decreasing on an interval containing all eigenvalues of the Laplacian $L_{\scG}$ and that
    $$
    \rho^2 = \lambda_2(\E[W_1^2]) = f_{b,p,q}(\lambda_{N-1}(L_{\scG}))
    $$
concluding the proof.
    \end{proof}
\section{Some results mentioned in the discussions}

\subsection{Tuning the learning rate in the general case (Proof of \Cref{eq:rate_diffp_2})}

\begin{cor}
    Assume $N\ge 2$.
    Assume each agent runs an instance of \algoname\ with learning rate $\eta=
    \frac{D \sqrt{\mu} \pmin} {L\sqrt{ T}}$ and gossip matrices set according to \eqref{eq:def_w}.
Then, the expected network regret can be bounded by
\begin{equation}
    \expRegbar
\le
    12 DL\frac{\sym(G)}{\pmin}N\sqrt{\frac{T}{\mu}}~.
\label{eq:rate_diffp}
\end{equation}
If 
$\pmin^2\pbar\leq \frac{1}{\sqrt N}$,
and each agent runs an instance of \algoname\ with learning rate
$
    \eta
=
    \frac{D \sqrt{\mu} \pmin} {L\sqrt{ T}}
$,
which only requires knowing $\pmin$, and gossip matrices set according to \eqref{eq:def_w} then
\begin{equation}
    \expRegbar
\le
    12 DL\frac{\sym(G)}{\pmin}N\sqrt{\frac{T}{\mu}}~.
\end{equation}
\end{cor}
\begin{proof}
Thanks to \Cref{th:lin_thm} , we have  \[ \expRegbar
    \le
        \frac{N D^2 }{\eta}
        +
        \frac{L^2}{\mu} \eta \bigg({\bar p N + \bar p (1-\bar p) - \sigma_p^2 }
    + 6 +
        3 \bar p N \frac{\rho}{1-\rho} \bigg)T\,.\] 
          We note that $\bar p N + \bar p (1-\bar p) - \sigma_p^2 \le  \bar 2 p N$ and 
        \[\frac{\rho}{1-\rho}\le \frac{1}{1-\rho}\le \frac{1}{1-\sqrt{1-\frac{\pmin^2}{\kappa}}} \le  2 \frac{\kappa}{\pmin^2} \, ,\]
        where the second inequality follows from \Cref{th:labgen}, and the last one follows from the concavity of $\sqrt \cdot$. 
        
        Consequently, we can prove 
        \[ \expRegbar
    \le
        \frac{N D^2 }{\eta}
    + \bigg(6 +
        8 \bar p N \frac{\kappa}{\pmin^2} \bigg)T \le
        \frac{N D^2 }{\eta}
    + \bigg(11\bar p N \frac{\kappa}{\pmin^2} \bigg)T\,.\]
Setting
$
    \eta
=
    \frac{D \sqrt{\mu} \pmin} {L\sqrt{ T}}
$,
which only requires knowing $\pmin$, suffices to obtain the bound
\begin{equation}
    \expRegbar
\le
    12 DL\frac{\sym(G)}{\pmin}N\sqrt{\frac{T}{\mu}}~.
\end{equation}
If additionally
$\pmin^2\pbar\leq \frac{1}{\sqrt N}$, we have 
$\bar p N \le  \sqrt N \frac{\kappa}{\pmin^2} $,  so that using \Cref{th:lin_thm} along with  $\frac{\rho}{1-\rho} \le  2\frac{\kappa}{\pmin^2}$  yields
\[ \expRegbar
    \le
         \frac{N D^2 }{\eta}
        +
        2 \sqrt N \frac{\kappa}{\pmin^2}
    + 6 +
         3  \sqrt N \frac{\kappa}{\pmin^2}
        \le \frac{N D^2 }{\eta}
        +
        11 \sqrt N \frac{\kappa}{\pmin^2}\,.\]

In turn, setting $
    \eta
=
    \frac{N^{1/4} D \sqrt{\mu} \pmin} {L\sqrt{ T}}
$ yields
\begin{equation*}
    \expRegbar
\le
    12 DL\frac{\sym(G)}{\pmin}N^{3/4}\sqrt{\frac{T}{\mu}}~.
\end{equation*}
\end{proof}

\subsection{Regret bounds with known $|S_t|$}
We study the variant of the algorithm where $\nabla \ell_t(v)$ in \Cref{eq:gr} is replaced by $\frac N {|S_t|} \nabla \ell_t(v)$.
Using this variant of the algorithm yields the bound given by the following theorem.
\begin{restatable}{theorem}{gentheorem}
    \label{th:generaL_1hm}
    With any $\eta>0$, the network regret can be bounded by
    \begin{align*}
        \expRegbar& \leq \frac{D^2 }{\eta} + \frac{L^2}{\mu} T \eta + 3 \eta  N^{} \frac{L^2}{\mu}   \frac{1}{1 - \rho}  T\,,
    \end{align*}
    Setting
    $\eta = D \sqrt{\mu}/(L\sqrt N \sqrt{T})$,
    \[\expRegbar \leq  4 \frac{LD}{\sqrt{\mu}} \sqrt{N} \frac{1}{1 - \rho} \sqrt{T} \,.\]
\end{restatable}
\begin{proof}
    Like for \Cref{th:lin_thm}, the proof relies on the use of an omniscient agent knowing the gradients of all losses  incurred  up to time $t-1$.

    Let us define these quantities $\barz_t$ and $\bar{g}_t$
    \begin{align*}
        &\bar{g}_t = \frac{1}{N}\sum_{v \in V} g_t(x_t(v), v)
        = \sum_{v \in V} \frac{\mathds{1}(v \in S_t)}{|S_t|} \nabla \loss_{t}(x_t(v),v)
        \\
        &\barz_t = \frac{1}{N} \sum_{v \in V} z_t(v).
    \end{align*}

    The decision of the omniscient agent is defined as
    \begin{equation*}
        \bar{x}_t = \operatorname{argmin}_{x \in X}\left\{\langle \bar{z}_t, x\rangle+\frac{1}{\eta} \psi(x)\right\},
    \end{equation*}
as in the proof of \Cref{th:lin_thm}.
    We still have
    \begin{equation*}
        \barz_{t+1} =  \barz_{t} + \bar{g}_t.
    \end{equation*}

    Still like for \Cref{th:lin_thm}, the proof of the theorem relies on the use of  \Cref{lem:decomp}, where $y_t$ is set to $\bar x_t$.

    \begin{equation}
    \label{eq:regret_decomposition_1_B}
        \Regbar_T\leq \underbrace{ 3 \sum_{t=1}^{T} \sum_{u \in V} \frac{\Ind{u\in S_t}}{|S_t|}L \left \| x_t(u) - {\bar x}_t\right\|}_{(A)}
        +
        \underbrace{\sum_{t=1}^{T} \sum_{v \in V} \frac{\Ind{v\in S_t}}{|S_t|} \vp{\nabla \loss_{t}(x_t(v),v)}{{\bar x}_t - x^{*}}}_{(B)}\, .
    \end{equation}

    Let us focus on Term $(B)$ first.
    According to the usual analysis of FTRL, we have
    \begin{equation}
        \label{ptB_B}
        \sum_{t=1}^{T} \sum_{v \in V} \frac{\Ind{v\in S_t}}{|S_t|} \vp{\nabla \loss_{t}(x_t(v),v)}{\bar x_t - x^{*}} \leq \frac{\psi(x^*)}{\eta} + \frac{L^2}{\mu} \sum_{t=1}^T\eta\1 \lr{S_t \neq \emptyset}\,.
    \end{equation}
    since the omniscient agent's updates correspond to FTRL on linear losses equal to  $\langle \tilde L_1, x\rangle$
    where
    $\tilde{l}_t =\sum_v \nabla \ell_1(x_t(v),v)  \frac{\mathds{1}(v\in S_t)}{|S_t|}$.

    Regarding Term $(A)$, s
    we still have (like  in \Cref{eq:z})
    \begin{equation}
        \label{eq:z_B}
        \left\|x_t(v)-\bar{x}_t\right\| \leq \eta/\mu  \left\|z_t(v)-\bar{z}_t\right\|_{*}.
    \end{equation}
    For any $t \in [T]$ and any $v \in [N]$, it also still holds that
    \begin{align*}
        \bz_{t+1}  
        &= \sum_{s=1}^{t-1} W_t \cdots W_{s+1} \bg_s + \bg_t\,
    \end{align*}
and that
    \begin{align*}
        \bar{z}_{t+1} = \frac 1 N \sum_{s=1}^t \bone^\top \bg_s.
    \end{align*}
     Consequently, we can still prove
    \begin{align*}
        z_{t+1}(v) -  \bar{z}_{t+1} 
        & =  \sum_{s=1}^{t-1} \bg_s^T \left[ W_t \cdots  W_{s+1} - \frac 1 N \bone \bone^\top \right] ^T e_v + \bg_t^\top (I - \frac 1 N \bone \bone^\top) e_v,
    \end{align*}
    and compute :
    \begin{align*}
        \left \|z_{t+1}(v) - \barz_{t+1} \right \|_* 
        &
        \leq \left \|\sum_{s=0}^{t-1} \left(  \sum_{u=1}^N \left ([W_t \cdots  W_{s+1}]_{u,v} -  \frac{1}{N} \right) g_{s}(x_{s}(u), u)\right)\right\|_* + \left\| g_{t}(x_t(v), v) - \bar{g}_{t}\right\|_*
        \\& \leq \sum_{s=0}^{t-1} \sum_{u=1}^N  \left|[W_t \cdots  W_{s+1}]_{u,v} -  \frac{1}{N} \right| \left\| g_{s}(x_{s}(u), u)\right\|_* + \left\| g_{t}(x_t(v), v) - \bar{g}_{t}\right\|_*
        \\& \leq \sum_{s=0}^{t-1} \sum_{u=1}^N \left \|W_t \cdots  W_{s+1}e_v-  \frac{\bone}{N} \right\|_{\infty} \left\| g_{s}(x_{s}(u), u)\right\|_* + \sum_{u=1}^N \left \| I e_v-  \frac{\bone}{N} \right\|_{\infty} \left\| g_{s}(x_{s}(u), u)\right\|_* 
    \end{align*}
    Since $\| \nabla \ell_1( \cdot, v) \|_{*}\leq L$ for all $v$,
    \begin{align}
        \|z_{t+1}(v) -  \bar{z}_{t+1}\|_*
        & \leq  \sum_{s=1}^{t-1}  |S_s| \left(\frac{N}{|S_s|}L\right) \left\|  W_t \cdots  W_{s+1} e_v - \frac 1 N \bone  \right\|_{\infty} +  NL\nonumber\\
        & \leq  \sum_{s=1}^{t-1} N L  \left\|  W_{t} \cdots  W_{s+1} e_v - \frac 1 N \bone  \right\|_{\infty} + NL\nonumber\\
        & \leq  \sum_{s=1}^{t-1} N L  \left\|  W_{t} \cdots  W_{s+1} e_v - \frac 1 N \bone  \right\|_{2} + NL \,, \label{eq: boundB_B}
    \end{align}
    considering $\|\cdot\|_{\infty} \leq  \|\cdot\|_2$.
By using Jensen's inequality, 
\[
            \mathbb{E}\left[\left\|W_{t} \cdots  W_{s+1} e_v - \frac \bone N \right\|_2 \right] \leq \sqrt{\mathbb{E}\left[(W_{t} \cdots  W_{s+1} e_v - \frac \bone N )^T (W_{t} \cdots  W_{s+1} e_v - \frac 1 N)  \right]} \nonumber\\
\]
By using \Cref{lem:lambW}, we obtain
    \begin{align*}
        \E[\|z_{t+1}(v) -  \bar{z}_{t+1}\|_*]
        & \leq   \sum_{s=1}^{t} N^{} L  \lambdatwo^{\frac{t-s}{2}} \\
        & \leq N^{}L \frac{1}{1 - \rho}
    \end{align*}
    Hence, coming back to the expression of Term $(A)$, and following \Cref{eq:z}
    \begin{align*}
        \sum_{t=1}^T \sum_{u \in \mathcal{V}} \frac{1}{|\mathcal{S}_t|} \mathds{1}(u \in \mathcal{S}_t) L \|x_t(u) - \bar{x}_t\|
        &
        \leq \sum_{u \in \mathcal{V}} \sum_{t=1}^T \frac{1}{|\mathcal{S}_t|} \eta/\mu L \|z_t(u) - \bar{z}_t\| \mathds{1}(u \in \mathcal{S}_t)
    \end{align*}
    and combining with the following,
    \begin{align*}
        &\E \left[\sum_{u \in \mathcal{V}} \sum_{t=1}^{T} \frac{1}{|\mathcal{S}_t|} \|z_{t} (u) - \bar{z}_{t} \|_* \mathbf{1}(u \in \mathcal{S}_{t}) \right]\\
        &= \sum_{u \in \mathcal{V}} \sum_{t=1}^{T}  \E\left[   \frac{\mathds{1}(u \in \mathcal{S}_{t}}{|\mathcal{S}_t|}\right]  \E\left[\|z_{t} (u) - \bar{z}_{t} \|_* ) \right]\\
        &\leq \sum_{u \in \mathcal{V}}\sum_{t=1}^{T} \E\left[   \frac{\mathds{1}(u \in \mathcal{S}_{t})}{|\mathcal{S}_t|}\right]   \times N L \frac{1}{1 - \rho}
        \\
        &\leq \sum_{t=1}^{T}    \lr{1- \Pi_{v \in \mathcal V} \lr{1-p_v}}N L \frac{1}{1 - \rho}
        \,,
    \end{align*}
    we have
    \begin{equation}
        \label{eq:ptA_B}
        \E\left[\sum_{t=1}^T \sum_{u \in \mathcal{V}} \frac{1}{|\mathcal{S}_t|} \mathds{1}(u \in \mathcal{S}_t) L \|x_t(u) - \bar{x}_t\|\right]
        \leq \eta    \frac{L^2}{\mu} N \frac{1}{1 - \rho} \tilde T
    \end{equation}
    where $\tilde T = \lr{1- \Pi_{v \in \mathcal V} \lr{1-p_v}}T$ is the expected number of times where there is at least one active agent.

    Recalling \Cref{eq:regret_decomposition_1_B} and combining \cref{eq:ptA_B} and \Cref{ptB_B}, we obtain
    \begin{align*}
        \mathbb{E} (\Regbar_T)& \leq \frac{D^2 }{\eta} + \frac{L^2}{\mu} \tilde T \eta + 3 \eta  N  \frac{L^2}{\mu}   \frac{1}{1 - \rho} \tilde T \,.
    \end{align*}
This  directly yields
    \begin{align*}
        \mathbb{E} (\Regbar_T)& \leq \frac{D^2 }{\eta} + \frac{L^2}{\mu} T \eta + 3 \eta  N  \frac{L^2}{\mu}   \frac{1}{1 - \rho}  T \,,
    \end{align*}
and the computation with the appropriate $\eta$ follows.

\end{proof}

\begin{restatable}{theorem}
{gentheoremhpst}
    \label{th:general_thm_hp_st}
    (Known $|S_t|$) For any $\eta>0$, with probability $ 1- \delta$, the network regret can be bounded by
    \begin{equation*}
        \Regbar_T \leq \frac{D^2 }{\eta} + \frac{L^2}{\mu} T \eta + 3\eta T N \frac{L^2}{\mu}\left (\frac{9\log{\frac{NT^2}{\delta}}}{1- \lambdatwo} + 2 + \frac{1}{T^2}\right).
    \end{equation*}
\end{restatable}

%

%
We have  \begin{align*}
        \Regbar_T  \leq \underbrace{ 3 \sum_{t=1}^{T} \sum_{u \in V} \frac{\Ind{u\in S_t}}{|S_t|}L \left \| x_t(u) - \bar x_t\right\|}_{(A)}
        +     \underbrace{\sum_{t=1}^{T} \sum_{v \in V} \frac{\Ind{v\in S_t}}{|S_t|} \vp{\nabla \loss_{t}(x_t(v),v)}{\bar x_t - x^{*}}}_{(B)}\, ,
    \end{align*}
    thanks to \Cref{lem:decomp}.
Thanks to \Cref{eq: boundB_B} , we can bound Term $(B)$ in the following way
\begin{align*}
     \sum_{v \in V} \frac{\Ind{v\in S_t}}{|S_t|} \vp{\nabla \loss_{t}(x_t(v),v)}{\bar x_t - x^{*}} \frac{\psi(x^*)}{\eta} + \frac{L^2}{\mu} \sum_{t=1}^T\eta
    \end{align*}
We then proceed by bounding Term $(A)$, by observing    
   \begin{align*} 3 \sum_{t=1}^{T} \sum_{u \in V} \frac{\Ind{u\in S_t}}{|S_t|}L \left \| x_t(u) - \bar x_t\right\|& \leq 3\eta \sum_{t=1}^{T} \sum_{u \in V} \max_{u \in V} L \left \| z_t(u) - {\bar z}_t\right\|  
    \end{align*}
Now, we focus on $\|z_{t+1}(v) -  \bar{z}_{t+1}\|_* $ . Starting from \Cref{eq: boundB_B} and applying \Cref{cons}, we obtain
\begin{align}
    \label{eq: highgossip}
    \|z_{t+1}(v) -  \bar{z}_{t+1}\|_* &\leq  \sum_{s=1}^{t-1} N L  \left\|  W_{t} \cdots  W_{s+1} e_v - \frac 1 N \bone  \right\|_{2} + 2NL  \nonumber
    \\& \leq NL(t-t^*) \epsilon + NLt^* + 2NL \nonumber
    \\& \leq NLT \epsilon + NLt^* + 2NL
\end{align}
with probability at least $1 - \epsilon T$. Hence,
    \begin{align*}
    \Regbar_T 
    & \leq 3\eta N\frac{L^2}{\mu} T( t^* + \epsilon T + 2)  + \frac{\psi(x^*)}{\eta} + \frac{L^2}{\mu} \sum_{t=1}^T\eta
\end{align*}
with probability at least $1- \epsilon N T^2$. Set $\epsilon = \frac{\delta}{NT^2} $ and $t^* = \frac{3\log{(\frac{NT^2}{\delta})}}{1- \lambda_2(\E[W^{\top}W])}$, we have
\begin{equation*}
    \Regbar_T \leq \frac{D^2 }{\eta} + \frac{L^2}{\mu} T \eta + 3\eta T N \frac{L^2}{\mu}\left (\frac{3\log{(\frac{NT^2}{\delta})}}{1- \lambda_2(\E[W^{\top}W])} + 2 + \frac{\delta}{NT}\right),
\end{equation*}
with probability at least $1 - \delta$.

\subsection{Nonstationary case}
\begin{cor}Suppose that the activation probabilities change over time within known bounds $\pmin$ and $\pmax$, and the activations events are independent.
Assume each agent runs an instance of \algoname\ with learning rate $\eta > 0$ and gossip matrices set according to \Cref{eq:def_w}.
Then, the expected network regret can be bounded by
    \begin{align}
    \nonumber
        \expRegbar
    &\le
        \frac{N D^2 }{\eta}
        +
        \frac{L^2}{\mu} \eta \bigg({\bar p N + \bar p (1-\bar p) - \sigma_p^2 } + 6 +
        3 \min\big(\bar p N, \sqrt N\big) \frac{\sqrt{B_{\lambda}}}{1-\sqrt{B_{\lambda}}} \bigg)T
    \end{align}
    where we denote by $B_{\lambda}= 1 - \frac {(p^{\min})^2}{\sym}  $.
    \end{cor}
\begin{proof}
The proof works exactly like that of \Cref{th:lin_thm}, with the difference that \Cref{eq:induc} does not hold anymore. Instead, we replace it by
\begin{align*}
    \mathbb{E}\left[\left\|W_{t} \cdots  W_{s+1} e_v - \frac \bone N \right\|_2 \right]
    \leq B_{\lambda}^{\frac{t-s}{2}} \,.
\end{align*}
 In fact, thanks to \Cref{th:labgen}, we know that $\lambda_2(\E[W_k^2]) \leq B_{\lambda}$ for all $k\in [T]$.
To prove the above, we then proceed by induction.
\begin{align*}
    \E\left[\left\|W_{k} \cdots  W_{s+1} e_v - \frac \bone N \right\|_2^2\right]
    &=\E\left[e_v^T \tilde W_{s+1} \cdots \tilde W_{k-1} \tilde W_{k}^2 \tilde  W_{k-1} \cdots \tilde W_{s+1} e_v  \right]\\
    &=\E\left[e_v^T \tilde W_{s+1} \cdots \tilde W_{k-1}  \E[\tilde W_{k}^2| \mathcal{F}_{k-1}] \tilde  W_{k-1} \cdots  \tilde W_{s+1} e_v  \right]\\
    & \leq   \left\| \mathbb{E}[W_k^2] - \frac 1 N \bone \bone^\top \right\|_{\textnormal{op}}  B_{\lambda}^{k-1-s}\\
    & \leq \sqrt{\lambda_2(\E[W_k^2])}B_{\lambda}^{k-1-s}\\
    &\leq B_{\lambda}^{k-s}\,,
\end{align*}
where $\tilde W_{s}$ is $W_s - \frac 1 N \bone \bone^T.$

Hence,
\[\E\left[\left\|W_{k} \cdots  W_{s+1} e_v - \frac \bone N \right\|_2\right]
\leq B_{\lambda}^{\frac{k-s}{2}}\,.\]
The rest of the proof is identical to that of  \Cref{th:lin_thm}.
\end{proof}
\subsection{Bound on $\rho/(1-\rho)$}

\begin{lemma} In the $p$-uniform case, if the gossip matrices are set according to \Cref{eq:def_w}, then
    \[
    \frac{\rho}{1-\rho} \le \frac{2\sym(G)}{p^2}
\]
 for all $0\le p\le1$.
\end{lemma}
\begin{proof}
We have
\begin{equation*}
     \frac{\rho}{1-\rho} \le \frac{1}{1-\rho}  \le \frac{1}{1-\sqrt{1-\frac{p^2}{\sym(G)}}} 
\end{equation*}
thanks to \Cref{th:labgen}. We also have $1-\sqrt{1-y}\ge \frac 1 2 y$ for all $y\le 1$ by concavity of $\sqrt \cdot$. Hence, 
\begin{equation*}
     \frac{\rho}{1-\rho}  \le \frac{2\sym(G)}{p^2} .
\end{equation*}
Since $\frac{\lambda_1(G)}{\lambda_1(G)-1} \ge 1$, this also yields $\frac{\rho}{1-\rho} \le \frac{2\sym(G)}{p^2}\frac{\lambda_1(G)}{\lambda_1(G)-1}\,$.

\end{proof}

\section{Experimental details}
The code is available at https://anonymous.4open.science/r/DistOLR-6085/.

\end{document}